\newtheorem{problem}{Problem}
\newtheorem{definition}{Definition}
\newtheorem{theorem}{Theorem}
\newtheorem{remark}{Remark}
\newcommand{\Ad}{\mathrm{Ad}}
\newcommand{\ad}{\mathrm{ad}}
\newcommand{\T}{\mathrm{T}}
\newcommand{\cross}{^\wedge}
\newcommand{\black}[1]{{\color{black}#1}}
\begin{document}

\title{Full State Estimation of Continuum Robots from Tip Velocities: A Cosserat-Theoretic Boundary Observer}
\author{Tongjia Zheng, Qing Han, and Hai Lin
\thanks{*This work was supported by the National Science Foundation under Grant No. CNS-1830335, IIS-2007949.}
\thanks{Tongjia Zheng is with the Department of Electrical Engineering and the Department of Mathematics, University of Notre Dame, Notre Dame, IN 46556, USA. (tzheng1@nd.edu.)}
\thanks{Qing Han is with the Department of Mathematics, University of Notre Dame, Notre Dame, IN 46556, USA. (Qing.Han.7@nd.edu.)}
\thanks{Hai Lin is with the Department of Electrical Engineering, University of Notre Dame, Notre Dame, IN 46556, USA. (hlin1@nd.edu.)}
}

\maketitle

\thispagestyle{empty}
\pagestyle{empty}

\begin{abstract}
State estimation of robotic systems is essential to implementing feedback controllers, which usually provide better robustness to modeling uncertainties than open-loop controllers. However, state estimation of soft robots is very challenging because soft robots have theoretically infinite degrees of freedom while existing sensors only provide a limited number of discrete measurements. This work focuses on soft robotic manipulators, also known as continuum robots. We design an observer algorithm based on the well-known Cosserat rod theory, which models continuum robots by nonlinear partial differential equations (PDEs) evolving in geometric Lie groups. The observer can estimate all infinite-dimensional continuum robot states, including poses, strains, and velocities, by only sensing the tip velocity of the continuum robot, and hence it is called a ``boundary'' observer. More importantly, the estimation error dynamics is formally proven to be locally input-to-state stable. The key idea is to inject sequential tip velocity measurements into the observer in a way that dissipates the energy of the estimation errors through the boundary. \black{The distinct advantage of this PDE-based design is that it can be implemented using any existing numerical implementation for Cosserat rod models. All theoretical convergence guarantees will be preserved, regardless of the discretization method. We call this property ``one design for any discretization''.} Extensive numerical studies are included and suggest that the domain of attraction is large and the observer is robust to uncertainties of tip velocity measurements and model parameters.
\end{abstract}

\begin{IEEEkeywords}
Continuum robots, soft robots, boundary estimation, Cosserat rod theory, PDE systems
\end{IEEEkeywords}

\section{Introduction}
Soft robotics is a rapidly growing research area \cite{laschi2016soft}.
Thanks to their compliant properties, soft robots are safer when interacting with humans and are adaptable in constrained environments.
As a result, soft robots have found many applications, such as medical surgeries and underwater maneuvers \cite{castano2019model}.

Despite the empirical success, the theoretical study of soft robots has been considered a challenging problem.
In May 2023, IEEE Control Systems Magazine published a special issue that highlights control challenges for soft robotics \cite{sepulchre2023control}.
Over the past years, theoretical and experimental studies have suggested that feedback schemes are more robust to modeling uncertainties \cite{della2023model}.
Nevertheless, the perception of soft robots is also challenging because, theoretically, soft robots have infinite degrees of freedom. In contrast, existing sensing techniques can only provide a limited number of discrete measurements of the continuum states.
Moreover, some robot states (such as strains) are more difficult to measure than others (such as positions).
This work aims to develop algorithms to estimate these unknown infinite-dimensional states.
We focus on soft robotic manipulators, also known as continuum robots.

\black{Estimation problems are typically solved using model prediction, sensing, or their combination.
If we have access to a precise dynamic robot model, its exact initial state, and all inputs acting on it, then we can iterate the model to recover its state trajectory \cite{till2019real, mathew2022sorosim}.
This approach is known as \textit{model prediction}.
However, the compliant behavior of the robot makes it difficult to create a precise model, and there are unavoidable environmental disturbances.
As a result, relying solely on model prediction often increases deviations from the robot's actual motions over time.
Hence, a more reliable approach is to combine with sensing.}

When sensing techniques are used, the majority of existing work has focused on static or \textit{shape estimation}, which assumes that the robot is in a quasi-static state and aims to estimate the configuration of the entire continuum robot from discrete measurements of certain variables, such as position and orientation.
The common strategy is to fit a parametrized static (time-independent) spatial curve to the discrete measurements \cite{song2015electromagnetic, bezawada2022shape}.
The accuracy of this approach depends on the assumed curve model and the number of measurements.
Physically more plausible solutions are those that fit a mechanical equilibrium (represented by ordinary differential equations of the arc parameter) to the discrete measurements along the arc length, such as a static Kirchhoff rod \cite{anderson2017continuum} or a static Cosserat rod \cite{lilge2022continuum}.

Due to the quasi-static assumption, shape estimation methods have severe limitations, such as the restriction to slow-speed motions. 
Hence, the recent trend is to design dynamic estimators.  
Dynamic or \textit{state estimation} is an iterative process that uses a dynamic model and its inputs to predict new states and uses sequential sensor measurements to correct the prediction. 
The most widely adopted dynamic models of continuum robots include geometrical and continuum mechanics models \cite{armanini2023soft,della2023model}.
Geometrical models, especially piecewise constant curvature models, represent the continuum robot using a finite number of basis functions \cite{chirikjian1994hyper, della2020model}.
Continuum mechanics models, especially Cosserat rod models, benefit from a rigorous definition of the kinetic and potential energy of the system and usually take forms of nonlinear PDEs \cite{simo1988dynamics, macchelli2007port, rucker2011statics, renda2014dynamic} which are difficult to study.
Therefore, most of the existing work relies on finite-dimensional approximations, such as finite-dimensional Lagrangian \cite{grazioso2019geometrically, boyer2020dynamics} or port-Hamiltonian representations \cite{mattioni2020modelling}.
Based on discretized dynamic models, extended Kalman filters (EKFs) have been applied for continuum robots \cite{loo2019h, stewart2022state}.
Nevertheless, state estimation based on the original continuum mechanics PDEs has rarely been explored.
To our knowledge, the only existing work is our previous work \cite{zheng2022pde} in which a PDE-based EKF is reported.

In summary, existing work has revealed several limitations.
First, existing work has mainly relied on finite-dimensional approximations, which introduce additional modeling errors.
Second, existing work typically requires a large number of sensors to achieve good accuracy. 
\black{Third, no result regarding the convergence of estimation errors is available. We ask the following fundamental questions. 
\textit{Is it possible to recover all unknown infinite-dimensional states based on existing sensing techniques?
What is the minimum amount of necessary measurement?}
This work is devoted to these questions.}

In this work, we design a \textit{boundary observer} for continuum robots based on Cosserat rod theory \cite{simo1988dynamics, macchelli2007port, rucker2011statics, renda2014dynamic} and prove its (local) stability.
This algorithm is able to recover all infinite-dimensional robot states, including poses, strains, and velocities, using the PDE model, inputs, and only velocity measurements taken at the tip (which explains the name ``boundary'' observer).
The key idea is to inject sequential tip velocity measurements into the observer in a way that dissipates the energy of state estimation errors through the boundary.
It has three main advantages over the existing work.
\begin{enumerate}
    \item It only requires measuring the tip velocity.
    \item It can be implemented using any existing numerical method for Cosserat rod models.
    \item The state estimation error is proven to be locally input-to-state stable.
\end{enumerate}

\black{The second property means that we do not have to develop a new numerical implementation for this PDE-based observer algorithm.
Instead, any numerical method for Cosserat rod models can be used, such as those based on finite difference \cite{gazzola2018ForwardInverseProblems}, finite element \cite{grazioso2019geometrically}, strain parameterization \cite{renda2018discrete, boyer2020dynamics}, and shooting methods \cite{till2019real, boyer2022statics}, as well as those that may appear in the future.
The numerical implementation can be studied independently, and all theoretical convergence guarantees established in this work will be preserved.
We refer to this property as ``one design for any discretization''.}

Regarding the third property, to the best of our knowledge, this is the first work to construct a (locally) stable PDE-based observer for continuum robots.
To highlight its contribution, we point out that stability guarantees for nonlinear state estimation are difficult even for finite-dimensional systems.
Boundary estimation of PDEs is even harder because one needs to estimate infinite-dimensional states from only point measurements taken at the boundary.
The Cosserat rod PDE studied in this work is a semilinear hyperbolic system in the geometric Lie group $SE(3)$ \cite{murray2017mathematical}.
Although boundary estimation of certain general classes of hyperbolic PDEs has been studied \cite{vazquez2011backstepping,castillo2013boundary}, their assumptions, such as linearity or global Lipschitz continuity of the nonlinear terms, are not satisfied here.
The Lie group structure also poses additional difficulties to state estimation because the system states of the Cosserat rod PDE are defined in the local body frames while the effect of certain inputs, such as gravity, is defined in the global world frame.
In this regard, the (local) stability guarantee established in this work is also a novel contribution in the context of boundary estimation of PDE systems.
Extensive numerical studies are included, which suggest that the domain of attraction is large and that the observer is robust to uncertainties of tip measurements and model parameters.
The results suggest the promising role of PDE control theory in the theoretical study of continuum and soft robots.

The remainder of the paper is organized as follows.
In Section \ref{section:modeling}, we introduce the Cosserat rod model and the state estimation problem. 
In Section \ref{section:design}, we design a boundary observer and prove its stability.
In Section \ref{section:implementation}, we discuss its implementation in practice.
In Section \ref{section:simulation}, we conduct a series of numerical simulations to validate the performance and robustness of the boundary observer.
Section \ref{section:conclusion} summarizes the contribution and points out future directions.

\section{Modeling and Problem Formulation}
\label{section:modeling}

\subsection{Notations and Preliminaries} \label{section:notation}
We will use some Lie group notations from \cite{murray2017mathematical}.
Denote by $SO(3)$ the special orthogonal group (the group of rigid rotations) and by $so(3)$ its associated Lie algebra.
Denote by $SE(3)=SO(3)\times\mathbb{R}^3$ the special Euclidean group (the group of rigid motions) and by $se(3)$ its associated Lie algebra.
A hat $\wedge$ in the superscript of a vector $\eta$ defines a matrix $\eta\cross$ whose definition depends on the dimension of $\eta$.
Specifically, if $\eta\in\mathbb{R}^3$, then $\eta\cross\in so(3)$ is such that $\eta\cross\xi=\eta\times\xi$ for any $\xi\in\mathbb{R}^3$ where $\times$ is the cross product.
If $\eta=[w^T,v^T]^T\in\mathbb{R}^6$ with $w,v\in\mathbb{R}^3$, then $\eta\cross\in se(3)$ is defined by
\begin{align*}
    \eta\cross=
    \begin{bmatrix}
        w\cross & v \\
        0 & 0
    \end{bmatrix}\in\mathbb{R}^{4\times4}.
\end{align*}
Let the superscript $\vee$ be the inverse operator of $\wedge$, i.e., $(\eta\cross)^\vee=\eta$.
The adjoint operator $\ad$ of $\eta=[w^T~v^T]^T\in\mathbb{R}^6$ with $w,v\in\mathbb{R}^3$ is defined by
\begin{align*}
    \ad_\eta=
    \begin{bmatrix}
        w\cross & 0 \\
        v\cross & w\cross
    \end{bmatrix}\in\mathbb{R}^{6\times6}.
\end{align*}
By definition, $\ad$ is a linear operator and satisfies $\ad_\eta\xi=-\ad_\xi\eta$ for $\xi,\eta\in\mathbb{R}^6$.

\begin{figure}[h]
    \centering
    \includegraphics[width=0.8\columnwidth]{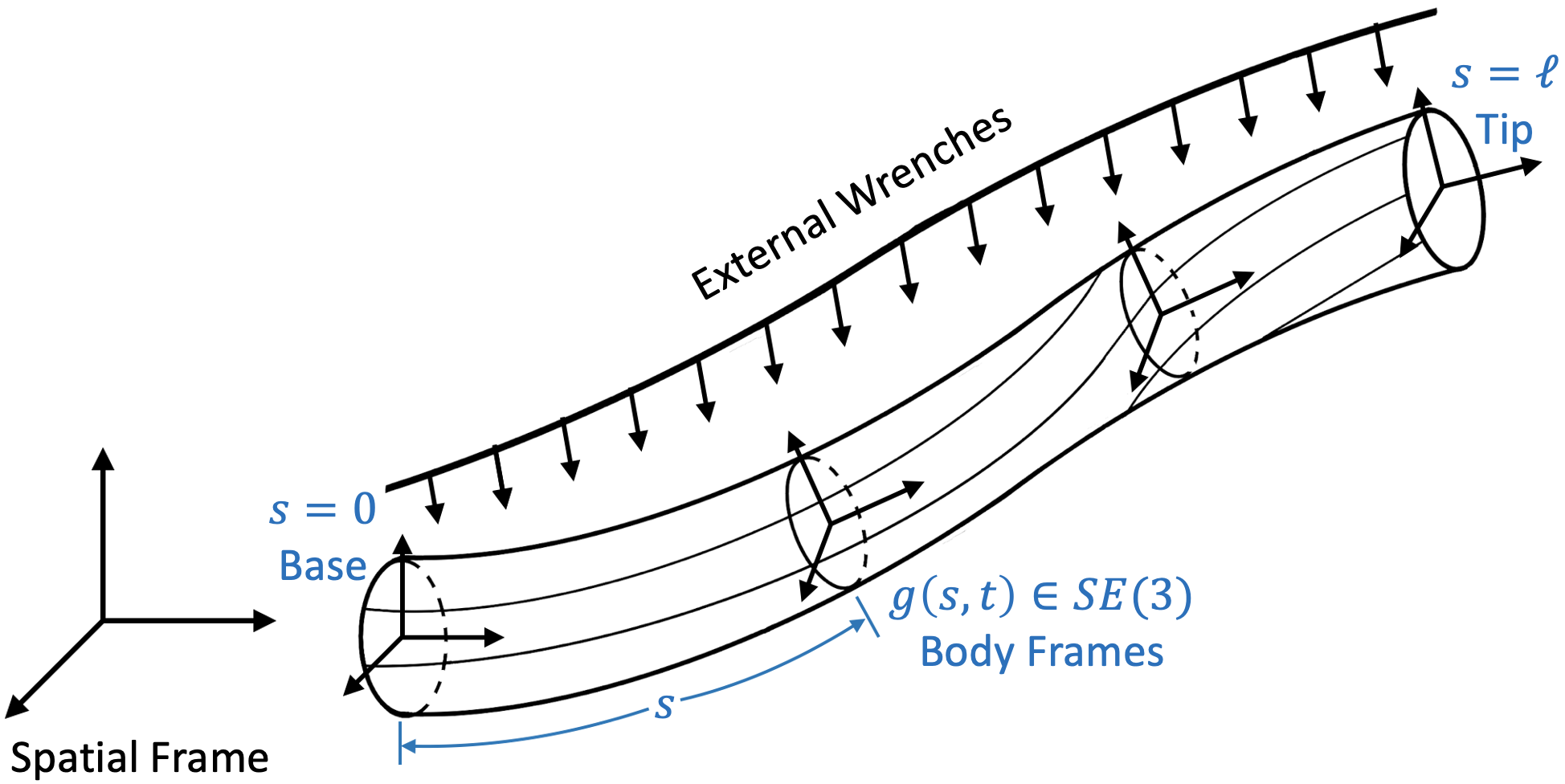}
    \caption{A Cosserat rod.}
    \label{fig:Cosserat rod}
\end{figure}

\subsection{Cosserat Rod Models for Continuum Robots}

Cosserat rod models are continuum mechanics models that describe the dynamic response of long and thin deformable rods undergoing external wrenches and have been widely used to model continuum robots \cite{macchelli2007port,rucker2011statics, renda2014dynamic, zheng2022task}.

\subsubsection{Configuration} 
A Cosserat rod is idealized as a continuous set of rigid cross-sections stacked along a centerline parameterized by the arc parameter $s\in[0,\ell]$; see Fig.~\ref{fig:Cosserat rod}.
Let $t\in[0,T]$ be time.
The pose of the entire rod is uniquely defined by a function $g(s,t)\in SE(3)$ given by
\begin{align*}
    g=
    \begin{bmatrix}
        R & p \\
        0 & 1
    \end{bmatrix},
\end{align*}
where $p(s,t)\in\mathbb{R}^3$ is the position vector of the centerline and $R(s,t)\in SO(3)$ is the rotation matrix of the cross-sections.
Note that there is a global frame while each cross-section also defines a local frame; see Fig.~\ref{fig:Cosserat rod}. 

\subsubsection{Kinematics} 
Let $w(s,t),v(s,t),u(s,t),q(s,t)\in\mathbb{R}^3$ be the fields of the angular velocity, linear velocity, angular strain, and linear strain, respectively, of the cross-sections in their local frames.
Let $\eta=[w^T~v^T]^T$ and $\xi=[u^T~q^T]^T$ be the fields of velocity twists and strain twists, respectively. 
The kinematics of the Cosserat rod is given by
\begin{align}
    \partial_tg=g\eta\cross, \label{eq:kinematics 1} \\
    \partial_sg=g\xi\cross, \label{eq:kinematics 2}
\end{align}
where $\partial_t:=\frac{\partial}{\partial t}$ and $\partial_s:=\frac{\partial}{\partial s}$ are partial derivatives.
The equality of mixed partial derivatives $\partial_{st}g=\partial_{ts}g$ yields the compatibility equation between the strain and the velocity
\begin{align}
    \partial_t\xi=\partial_s\eta+\ad_{\xi}\eta. \label{eq:compatibility}
\end{align}

\subsubsection{Dynamics}
Let $m(s,t),n(s,t),l(s,t),f(s,t)\in\mathbb{R}^3$ be the fields of the internal moment, internal force, external moment, and external force, respectively, of the cross-sections in their local frames. 
Let $\Phi=[m^T~n^T]^T$ and $\Psi=[l^T~f^T]^T$ be the fields of internal and external wrenches, respectively.
Applying Hamilton's principle in the context of Lie groups yields the following dynamics of the Cosserat rod in the form of nonlinear PDEs:
\begin{align} \label{eq:dynamics}
    J\partial_t\eta-\ad_\eta^TJ\eta=\partial_s\Phi-\ad_\xi^T\Phi+\Psi,
\end{align}
where $J(s)\in\mathbb{R}^{6\times6}$ is the cross-sectional inertia matrix.
We allow $J(s)$ to be a function of $s$ to account for, e.g., nonuniform cross-sectional areas.
Since this work is primarily interested in continuum robotic manipulators, the following boundary conditions at $s=0$ and $s=\ell$ are adopted
\begin{align}\label{eq:BC}
    & \eta(0,t)=\eta_-(t), \quad \Phi(\ell,t)=\Psi_+(t),
\end{align}
where $\eta_-(t)$ is the velocity of the station to which the continuum robot is attached and $\Psi_+(t)$ is the point wrench applied at the tip, such as a load at the tip.

\subsubsection{Inputs}
\black{The input wrenches of continuum robots can arise internally or externally, and their values can be specified in either the global frame, such as for gravity, or the local frames, such as for the forces from embedded actuators.
It will be convenient to define the operator $\T$ of $R$ by
\begin{align*}
    \T_R = 
    \begin{bmatrix}
        R & 0 \\
        0 & R
    \end{bmatrix} \in \mathbb{R}^{6\times6},
\end{align*}
which is used to transform a concatenation of moment and force from a coordinate system defined by $R$ to the global coordinate system.
For example,
\begin{align*}
    \begin{bmatrix}
        l_{\text{glb}} \\
        f_{\text{glb}}
    \end{bmatrix}
    =
    \begin{bmatrix}
        Rl_{\text{loc}} \\
        Rf_{\text{loc}}
    \end{bmatrix}
    = \T_R
    \begin{bmatrix}
        l_{\text{loc}} \\
        f_{\text{loc}}
    \end{bmatrix}.
\end{align*}
It is easy to verify that its inverse is given by $\T_R^T$.

\begin{remark}
Note that $\T_R$ differs from the Adjoint operator $\Ad_g$, which is used to transform velocity twists and wrenches between different frames \cite{murray2017mathematical}.
The transformation $\T_R$ only considers the change of orientation of the coordinate system and does not account for the change of origin.
It is the key transformation of forces and moments to establish the equivalence of the Newtonian formulation \cite{rucker2011statics} and the Lagrangian formulation \cite{renda2014dynamic} of Cosserat rods \cite{tummers2023cosserat}.
\end{remark}
}

We assume the wrench fields and point wrench at the tip take the following general forms:
\begin{align}
    \Phi(s,t) & =\phi(s,t)+\phi_{\text{loc}}(s,t), \label{eq:Phi} \\
    \Psi(s,t) & =\psi_{\text{loc}}(s,t)+\black{\T_{R(s,t)}^T}\psi_{\text{glb}}(s,t), \label{eq:Psi} \\
    \Psi_+(t) & =\psi_{\text{loc}}^+(t)+\black{\T_{R(\ell,t)}^T}\psi_{\text{glb}}^+(t). \label{eq:Psi tip}
\end{align}
In \eqref{eq:Phi}, $\phi_{\text{loc}}$ represents the wrench field applied internally with respect to the local frames, such as fluidic or tendon actuation.
$\phi$ is the wrench field due to elastic deformation and is assumed to satisfy the following linear constitutive law:
\begin{align} \label{eq:linear constitutive law}
    \phi=K(\xi-\xi_o),
\end{align}
where $K(s)\in\mathbb{R}^{6\times6}$ is the cross-sectional stiffness matrix and $\xi_o(s)$ is the reference strain field.
Again, $K(s)$ can be a function of $s$ due to nonuniform material properties.
In \eqref{eq:Psi}, $\psi_{\text{loc}}$ represents the distributed wrench applied externally whose value is specified in the local frames, and \black{$\T_{R}^T\psi_{\text{glb}}$} represents the distributed wrench applied externally whose value is specified by $\psi_{\text{glb}}$ in the global frame (like gravity and loads) and converted into the local frames through the coordinate transform $\T_{R}^T$.
It is important to distinguish between $\psi_{\text{loc}}$ and $\T_{R}^T\psi_{\text{glb}}$ for state estimation problems because the rotation $R$ is also an unknown robot state.
Similarly, in \eqref{eq:Psi tip}, $\psi_{\text{loc}}^+(t)$ represents the point wrench applied at the tip whose value is specified in the local frame and $\T_{R(\ell,t)}^T\psi_{\text{glb}}^+(t)$ represents the point wrench applied at the tip whose value is specified in the global world frame by $\psi_{\text{glb}}^+(t)$, such as a load at the tip.

\begin{remark}
A linear constitutive law with damping may be used to replace \eqref{eq:linear constitutive law} by
\begin{align}\label{eq:constitutive law with damping}
    \phi=K(\xi-\xi_o)+D\partial_t\xi,
\end{align}
where $D\in\mathbb{R}^{6\times6}$ is the damping matrix that models the viscoelastic property of the material.
Our numerical study shows that the boundary observer to be presented later performs well in this case.
However, from a theoretical perspective, \eqref{eq:constitutive law with damping} introduces mixed partial derivatives into the PDE system, making it much harder to study its stability.
More general nonlinear constitutive laws may also be used to replace \eqref{eq:linear constitutive law}, which will make the PDE system quasilinear instead of semilinear.
The extension to these general cases is left for future work.
\end{remark}

\subsection{Formulation of the Estimation Problem}

\black{To formulate a state estimation problem, it is important to look for a minimum representation of the system, i.e., the smallest set of system states and equations that uniquely determine the solution of the system.
For the Cosserat rod model, a minimum set of states is given by $\{g,\eta\}$ or $\{\xi,\eta\}$ because $g$ and $\xi$ are uniquely determined by each other through \eqref{eq:kinematics 2}.}
We find it convenient to work with $\{\xi,\eta\}$ because they satisfy the following semilinear hyperbolic system:
\begin{align}\label{eq:PDE system}
\left\{
\begin{aligned}
    & \partial_t\xi=\partial_s\eta+\ad_\xi\eta, \\
    & J\partial_t\eta=\partial_s(\phi+\phi_{\text{loc}})-\ad_{\xi}^T(\phi+\phi_{\text{loc}}) \\
    & \qquad\quad +\ad_{\eta}^TJ\eta+\psi_{\text{loc}}+\black{\T_R^T}\psi_{\text{glb}}, \\
    & \begin{aligned}
    & (\phi+\phi_{\text{loc}})(\ell,t)=\psi_{\text{loc}}^+(t)+\black{\T_{R(\ell,t)}^T}\psi_{\text{glb}}^+(t), \\
    & \eta(0,t)=\eta_-(t), \\
    & \phi(s,0)=\phi_0(s), \\
    & \eta(s,0)=\eta_0(s), 
    \end{aligned}
\end{aligned}
\right.
\end{align}
where $g$ and $\phi$ are determined by \eqref{eq:kinematics 2} and \eqref{eq:linear constitutive law} (or \eqref{eq:constitutive law with damping}) respectively at every $t$, and $\{\eta_0,\phi_0\}$ are the initial conditions.

Assume that the robot model \eqref{eq:PDE system}, its left-boundary condition $\eta_-$, all distributed inputs $\{\phi_{\text{loc}},\psi_{\text{loc}},\psi_{\text{glb}}\}$, and all right-boundary inputs $\{\psi_{\text{loc}}^+,\psi_{\text{glb}}^+\}$ are known.
\black{We will illustrate how to determine the model coefficients, boundary conditions, and inputs for a tendon-driven continuum robot in Section~\ref{section:implementation}.}
Note that the initial conditions do not have to be known.
Also, assume we can measure the tip velocity $\eta(\ell,t)$.
\black{This is called a boundary measurement and can be obtained by using an IMU and a motion capture system.
Since an IMU also measures orientation, we will assume that the tip orientation $R(\ell,t)$ is known for simplicity.}
(In fact, this assumption is needed only when $\psi_{\text{glb}}^+$ is nonzero).
As a result, we essentially assume that the right-boundary condition $\Psi_+$ is known.
We aim to estimate the continuum robot states $\{\xi,\eta\}$ based on the assumed information.
Once this is done, one can recover other robot states $\{g,\phi\}$ using \eqref{eq:kinematics 2} and \eqref{eq:linear constitutive law} (or \eqref{eq:constitutive law with damping}) at every $t$.
The state estimation problem is stated as follows.

\begin{problem}[Boundary Estimation]
Given the continuum robot model \eqref{eq:PDE system}, its boundary conditions $\{\eta_-(t),\Psi_+(t)\}$, distributed inputs $\{\phi_{\text{loc}}(s,t),\psi_{\text{loc}}(s,t),\psi_{\text{glb}}(s,t)\}$, and tip velocity measurements $\eta(\ell,t)$, design an algorithm to estimate the robot states $\{\xi(s,t),\eta(s,t)\}$.
\end{problem}

\begin{remark}
Boundary estimation of certain general classes of hyperbolic PDEs has been studied in the literature \cite{vazquez2011backstepping, castillo2013boundary}.
However, their results are not directly applicable to our case because our hyperbolic PDE \eqref{eq:PDE system} is semilinear, and the nonlinear terms are not globally Lipschitz continuous.
Another difficulty is due to the Lie group structure.
Note that the system states $\{\xi,\eta\}$ are defined in the local frames of the cross sections.
However, their dynamics depend on the global frame through \black{$\T_R^T\psi_{\text{glb}}$}, which includes gravity, and $R$ must be calculated from $\xi$ through spatial integration \eqref{eq:kinematics 2}.
This adds additional difficulty to the stability analysis.
\end{remark}

\section{Design and Stability of the Boundary Observer}
\label{section:design}

For an unknown state variable, say $\eta$, we use a hat $\wedge$ over the variable, i.e., $\hat{\eta}$, to denote its estimate.
We distinguish between $\hat{(\cdot)}$ and $(\cdot)\cross$ where the former is a state estimate and the latter is the hat operator defined in Section \ref{section:notation}.
Our estimation algorithm is called a \textit{boundary observer}.
The key idea is to inject the tip velocity measurement $\eta(\ell,t)$ into the observer in a way that dissipates the energy of the state estimation errors through the boundary.
The boundary observer is designed as:
\begin{align}\label{eq:boundary observer}
\hspace{-5.3pt}\left\{
\begin{aligned}
    & \partial_t\hat{\xi}=\partial_s\hat{\eta}+\ad_{\hat{\xi}}\hat{\eta}, \\
    & J\partial_t\hat{\eta}=\partial_s(\hat{\phi}+\phi_{\text{loc}})-\ad_{\hat{\xi}}^T(\hat{\phi}+\phi_{\text{loc}}) \\
    & \qquad\quad +\ad_{\hat{\eta}}^TJ\hat{\eta}+\psi_{\text{loc}}+\black{\T_{\hat{R}}^T\psi_{\text{glb}}}, \\
    & \begin{aligned}
    & (\hat{\phi}+\phi_{\text{loc}})(\ell,t)=\Psi_+(t)-\Gamma(\hat{\eta}-\eta)(\ell,t), \\
    & \hat{\eta}(0,t)=\eta_-(t), \\
    & \hat{\phi}(s,0)=\hat{\phi}_0(s), \\
    & \hat{\eta}(s,0)=\hat{\eta}_0(s),
    \end{aligned}
\end{aligned}
\right.
\end{align}
where the intermediate states $\{\hat{g},\hat{\phi}\}$ are computed using 
\begin{align}
    \partial_s\hat{g} & =\hat{g}\hat{\xi}\cross, \\
    \hat{\phi} & =K(\hat{\xi}-\xi_o), \label{eq:constitutive estimate} \\
    \text{or}~\hat{\phi} & =K(\hat{\xi}-\xi_o)+D\partial_t\hat{\xi}, \label{eq:constitutive estimate with damping}
\end{align}
at every $t$ according to \eqref{eq:kinematics 2} and \eqref{eq:linear constitutive law} (or \eqref{eq:constitutive law with damping}), $\Gamma\in\mathbb{R}^{6\times6}$ is a positive definite matrix representing the observer gain which can be used to adjust the performance of the observer, and $\{\hat{\phi}_0,\hat{\eta}_0\}$ are initial estimates, \black{which do not have to be the same as the actual initial states $\{\phi_0,\eta_0\}$}.

\begin{figure*}[h]
\begin{align}\label{eq:complete error PDE}
\begin{split}
    & \underbrace{\begin{bmatrix}
        K^{-1} & 0 \\
        0 & J
    \end{bmatrix}}_{\Lambda}\partial_t
    \begin{bmatrix}
        \tilde{\phi} \\
        \tilde{\eta}
    \end{bmatrix}+
    \underbrace{\begin{bmatrix}
        0 & -I \\
        -I & 0
    \end{bmatrix}}_{\Bar{A}}\partial_s
    \begin{bmatrix}
        \tilde{\phi} \\
        \tilde{\eta}
    \end{bmatrix}+
    \underbrace{\begin{bmatrix}
        0 & -\ad_{\xi_o} \\
        \ad_{\xi_o}^T & 0
    \end{bmatrix}}_{\Bar{B}}
    \begin{bmatrix}
        \tilde{\phi} \\
        \tilde{\eta}
    \end{bmatrix} \\
    = & 
    \underbrace{\begin{bmatrix}
        \ad_{K^{-1}\tilde{\phi}}\tilde{\eta} \\
        \ad_{\tilde{\eta}}^TJ\tilde{\eta}-\ad_{K^{-1}\tilde{\phi}}^T\tilde{\phi}
    \end{bmatrix}}_{\Bar{F}(\tilde{y})}
    +
    \underbrace{\begin{bmatrix}
        \ad_{K^{-1}\phi}\tilde{\eta}+\ad_{K^{-1}\tilde{\phi}}\eta \\
        -\ad_{K^{-1}\phi}^T\tilde{\phi}-\ad_{K^{-1}\tilde{\phi}}\phi+\ad_{\eta}^TJ\tilde{\eta}+\ad_{\tilde{\eta}}^TJ\eta
    \end{bmatrix}}_{\Bar{G}(y,\tilde{y})}+
    \underbrace{\begin{bmatrix}
        0 \\
        -\ad_{K^{-1}\tilde{\phi}}\phi_{\text{loc}}+
        \black{\T_{\tilde{R}}^T\psi_{\text{glb}}}
    \end{bmatrix}}_{\Bar{H}(d,\tilde{R},\tilde{y})}
\end{split}
\end{align}
\end{figure*}

This boundary observer has the classic observer structure in that it consists of a copy of the system plant plus injection of $(\hat{\eta}-\eta)(\ell,t)$, the estimation error of the tip velocity, through the boundary condition of $\hat{\phi}$.
The injected term is designed in such a way that it dissipates the energy of the estimation errors, which can be more clearly observed when we obtain the system of estimation errors \eqref{eq:compact error PDE}.
Such a boundary condition is called a dissipative boundary condition \cite{bastin2016stability}.
This boundary observer has three major advantages.
\begin{enumerate}
    \item It only requires measuring the velocity of the tip.
    \item It can be implemented using any existing numerical method for Cosserat rod models.
    In particular, the term $-\Gamma(\hat{\eta}-\eta)(\ell,t)$ can be numerically implemented as a virtual point wrench at the tip.
    \item The estimation error can be proven to be locally input-to-state stable (in the case of a linear constitutive law), which will be given in Theorem \ref{thm:stability} shortly.
\end{enumerate}

\begin{remark}
\black{The second property has a significant implication in practice. 
It means we do not need to develop a new numerical implementation for this observer.
Any numerical method for Cosserat rod models can be used \cite{gazzola2018ForwardInverseProblems, grazioso2019geometrically, renda2018discrete, boyer2020dynamics, till2019real, boyer2022statics}.
Given that our PDE-based design is eventually implemented based on certain discretization, one may wonder about its real advantage over a design based on discretized models.
Here is the answer.
First, PDE models are more compact and physically more interpretable.
If we discretize the PDE in the first place, it will be difficult to observe that a simple technique of boundary dissipation can produce a stable nonlinear observer.
Second, with a PDE-based design, the numerical implementation can be studied independently, and all established theoretical convergence guarantees will be preserved.
We refer to it as ``one design for any discretization''.}
\end{remark}

Now we prove the convergence of estimation errors.
We will always assume the linear constitutive law \eqref{eq:linear constitutive law}. 
Define
\begin{gather*}
    \tilde{\phi}=\hat{\phi}-\phi, \quad \tilde{\eta}=\hat{\eta}-\eta, \quad \tilde{\xi}=\hat{\xi}-\xi, \quad \tilde{R}=\hat{R}-R, \\
    y=
    \begin{bmatrix}
        \phi \\
        \eta
    \end{bmatrix}, \quad
    \hat{y}=
    \begin{bmatrix}
        \hat{\phi} \\
        \hat{\eta}
    \end{bmatrix}, \quad
    \tilde{y}=
    \begin{bmatrix}
        \tilde{\phi} \\
        \tilde{\eta}
    \end{bmatrix}, \quad
    d=
    \begin{bmatrix}
        \phi_{\text{loc}} \\
        \psi_{\text{glb}}
    \end{bmatrix}, \\
    y_0=
    \begin{bmatrix}
        \phi_0 \\
        \eta_0
    \end{bmatrix}, \quad
    \hat{y}_0=
    \begin{bmatrix}
        \hat{\phi}_0 \\
        \hat{\eta}_0
    \end{bmatrix}, \quad
    \tilde{y}_0=\hat{y}_0-y_0.
\end{gather*}
By \eqref{eq:linear constitutive law} and \eqref{eq:constitutive estimate},
\begin{align}\label{eq:tilde constitutive law}
    \tilde{\phi}=K\tilde{\xi}.
\end{align}

Subtracting \eqref{eq:boundary observer} from \eqref{eq:PDE system}, and by the linearity of $\ad$ and \eqref{eq:tilde constitutive law}, we obtain that $\tilde{\phi}$ satisfies
\begin{align*}
    K^{-1}\partial_t\tilde{\phi} & =\partial_t\tilde{\xi} \\
    & =\partial_s\tilde{\eta}+\ad_{\hat{\xi}}\hat{\eta}-\ad_\xi\eta \\
    & =\partial_s\tilde{\eta}+\ad_{\hat{\xi}}\hat{\eta}-\ad_{\hat{\xi}}\eta-\ad_\xi\tilde{\eta} \\
    & \quad+\ad_\xi\tilde{\eta}+\ad_{\hat{\xi}}\eta-\ad_\xi\eta \\
    & =\partial_s\tilde{\eta}+\ad_{\tilde{\xi}}\tilde{\eta}+\ad_\xi\tilde{\eta}+\ad_{\tilde{\xi}}\eta \\
    & =\partial_s\tilde{\eta}+\ad_{K^{-1}\tilde{\phi}}\tilde{\eta}+\ad_{(K^{-1}\phi+\xi_o)}\tilde{\eta}+\ad_{\tilde{\xi}}\eta,
\end{align*}
and by a similar derivation, $\tilde{\eta}$ satisfies
\begin{align*}
    J\partial_t\tilde{\eta} & =\partial_s\tilde{\phi}-\ad_{\tilde{\xi}}^T\tilde{\phi}-\ad_\xi^T\tilde{\phi}-\ad_{\tilde{\xi}}^T(\phi+\phi_{\text{loc}}) \\
    & \quad +\ad_{\tilde{\eta}}^TJ\tilde{\eta}+\ad_\eta^TJ\tilde{\eta}+\ad_{\tilde{\eta}}^TJ\eta+\black{\T_{\hat{R}}^T\psi_{\text{glb}}-\T_R^T\psi_{\text{glb}}} \\
    & =\partial_s\tilde{\phi}-\ad_{K^{-1}\tilde{\phi}}^T\tilde{\phi}+\ad_{\tilde{\eta}}^TJ\tilde{\eta}-\ad_{(K^{-1}\phi+\xi_o)}^T\tilde{\phi} \\
    & \quad -\ad_{K^{-1}\tilde{\phi}}^T(\phi+\phi_{\text{loc}})+\ad_\eta^TJ\tilde{\eta}+\ad_{\tilde{\eta}}^TJ\eta+\black{\T_{\tilde{R}}^T\psi_{\text{glb}}},
\end{align*}
with boundary conditions
\begin{align*}
    \tilde{\eta}(0,t)=0, \quad \tilde{\phi}(\ell,t)=-\Gamma\tilde{\eta}(\ell,t).
\end{align*}

The complete system of estimation errors can be written as a semilinear hyperbolic system given in \eqref{eq:complete error PDE}.
Note that the estimation errors do not depend on the boundary conditions $\{\eta_-,\Psi_+\}$ and the externally applied wrench $\psi_{\text{loc}}$ because they are completely compensated by the observer.
By left-multiplying \eqref{eq:complete error PDE} with $\Lambda^{-1}(s)$ (defined in \eqref{eq:complete error PDE}), we rewrite it in the following compact form:
\begin{align}\label{eq:compact error PDE}
\hspace{-1.8pt}\left\{
\begin{aligned}
    & \partial_t\tilde{y}+A\partial_s\tilde{y}+B\tilde{y}=F(\tilde{y})+G(y,\tilde{y})+H(d,\tilde{R},\tilde{y}), \\
    & \begin{aligned}
    & \tilde{\phi}(\ell,t)=-\Gamma\tilde{\eta}(\ell,t), \\
    & \tilde{\eta}(0,t)=0, \\
    & \tilde{y}(s,0)=\tilde{y}_0(s),
    \end{aligned}
\end{aligned}
\right.
\end{align}
where $A=\Lambda^{-1}\Bar{A}$, $B=\Lambda^{-1}\Bar{B}$, $F=\Lambda^{-1}\Bar{F}$, $G=\Lambda^{-1}\Bar{G}$, and $H=\Lambda^{-1}\Bar{H}$.
We observe that the boundary condition of $\tilde{\phi}$ behaves like a damping term that dissipates energy from the system \eqref{eq:compact error PDE} \cite{bastin2016stability}.
This term is the key to ensuring stability.

In the following theorem, input-to-state stability is used to study the stability of \eqref{eq:compact error PDE}.
\black{An introduction to this notion is included in the Appendix.}
Roughly speaking, a system is input-to-state stable if its solution is bounded by a positive function of external inputs and converges asymptotically in the absence of inputs.
In our case, we will treat $y$ and $d$ as external inputs/disturbances and establish that $\|\tilde{y}(\cdot,t)\|_{H^1}$ locally converges to a neighborhood bounded by a positive function of $\|y(\cdot,t)\|_{H^1}$ (the actual robot states) and $\|d(\cdot,t)\|_{H}$ (the inputs including gravity)\footnote{For a function $f(s)$, its $H^1$ norm is defined by $\|f\|_{H^1}:=\big(\int f^2+f'^2ds\big)^{1/2}$.
For a function $f(s,t)$, we define $\|f(\cdot,t)\|_{H}:=\big(\int f^2+(\partial_sf)^2+(\partial_tf)^2ds\big)^{1/2}$.
Roughly speaking, these norms include not only the function itself but also its partial derivatives.}.
The well-posedness of the PDE systems in this work has been studied in \cite{li2001semi, rodriguez2022networks}.
In the following theorem, we assume that its solution uniquely exists in the functional space $C^0([0,T];H^1(0,\ell))$, which is consistent with the results in \cite{rodriguez2022networks}.

\begin{theorem}\label{thm:stability}
Consider \eqref{eq:compact error PDE}.
If $\Gamma$ is positive definite, then the estimation error $\|\tilde{y}(\cdot,t)\|_{H^1}$ is locally input-to-state stable in the sense that there exist constants $k_0,k_1,k_2,b,\lambda,\kappa_1,\kappa_2>0$ such that for all \black{$\|\tilde{y}(\cdot,0)\|_{H^1}<k_0$}, $\|y(\cdot,t)\|_{H^1}<k_1$, $\|d(\cdot,t)\|_{H}<k_2$, and $t\geq0$, the following holds:
\begin{align}\label{eq:ISS}
    \|\tilde{y}(\cdot,t)\|_{H^1} & \leq b\|\tilde{y}(\cdot,0)\|_{H^1}e^{-\lambda t}+\kappa_1\sup_{0\leq\tau\leq t}\|y(\cdot,\tau)\|_{H^1} \nonumber \\
    & \quad +\kappa_2\sup_{0\leq\tau\leq t}\|d(\cdot,\tau)\|_{H}.
\end{align}
\end{theorem}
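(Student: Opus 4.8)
The plan is to prove exponential input-to-state stability by building a weighted Lyapunov functional in the $H^1$ sense and exploiting the dissipative boundary condition at $s=\ell$. Because $\Lambda=\mathrm{diag}(K^{-1},J)$ is symmetric positive definite and the principal matrix $\bar A$ is symmetric and constant, the semilinear system \eqref{eq:complete error PDE} is already in symmetric hyperbolic form, so I would first work with the quadratic energy $V_0(t)=\frac12\int_0^\ell \tilde y^\T \Lambda\, \tilde y\,ds$. Differentiating in time and substituting the PDE, the transport term integrates by parts to the boundary expression $-\frac12[\tilde y^\T\bar A\tilde y]_0^\ell$; since $\tilde y^\T\bar A\tilde y=-2\tilde\phi^\T\tilde\eta$, the left condition $\tilde\eta(0,t)=0$ annihilates the $s=0$ contribution, while the dissipative condition $\tilde\phi(\ell,t)=-\Gamma\tilde\eta(\ell,t)$ turns the $s=\ell$ contribution into $-\tilde\eta(\ell,t)^\T\Gamma\,\tilde\eta(\ell,t)\le0$, the key sink of energy. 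A one-line computation shows the zeroth-order matrix $\bar B$ is skew-symmetric, so $\tilde y^\T\bar B\tilde y\equiv0$ and this term drops out.

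Next I would bound the remaining terms $\bar F,\bar G,\bar H$. The purely quadratic $\bar F(\tilde y)$ is controlled by $\int\tilde y^\T\bar F\,ds\le C\|\tilde y\|_{L^\infty}\|\tilde y\|_{L^2}^2$ and the bilinear $\bar G(y,\tilde y)$ by $C\|y\|_{L^\infty}\|\tilde y\|_{L^2}^2$; using the embedding $\|\cdot\|_{L^\infty}\le C\|\cdot\|_{H^1}$ together with the smallness hypotheses $\|\tilde y(\cdot,0)\|_{H^1}<k_0$ and $\|y\|_{H^1}<k_1$, both are absorbed into the boundary dissipation once $k_0,k_1$ are small. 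Because pure boundary dissipation controls only $\tilde\eta(\ell,t)$ and not the interior, to extract a strictly negative $-\lambda V_0$ I would insert a spatial weight (equivalently, pass to Riemann invariants and weight the right- and left-moving components by $e^{-\mu s}$ and $e^{+\mu s}$), invoking the standard weighted-functional machinery for one-dimensional hyperbolic systems \cite{bastin2016stability}; positive definiteness of $\Gamma$ is precisely what makes the resulting boundary quadratic form dissipative for sufficiently small $\mu$.

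The genuinely delicate term is the Lie-group forcing $\T_{\tilde R}^\T\psi_{\mathrm{glb}}$ in $\bar H$, which couples the error dynamics to the orientation error $\tilde R=\hat R-R$. Since $\tilde R$ is not one of the evolved states but is reconstructed by spatially integrating $\partial_s g=g\xi\cross$, I would derive the auxiliary equation $\partial_s\tilde R=\hat R\,\tilde u\cross+\tilde R\,u\cross$ and, using $\tilde R(0,t)=0$ together with Gronwall in $s$, obtain $\|\tilde R(\cdot,t)\|_{L^\infty}\le C(k_1)\,\|\tilde\xi(\cdot,t)\|_{L^2}\le C'\|\tilde\phi(\cdot,t)\|_{L^2}$. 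This converts the orientation-coupling term into a bound of order $\|d\|\,\|\tilde y\|$, which after Young's inequality yields the additive input gain $\kappa_2\|d\|$ plus a remainder dominated by the dissipation. This step resolves the mismatch between the body-frame states and the world-frame gravity input and is the main obstacle of the whole argument; everything else is a local hyperbolic energy estimate, whereas this part forces one to step outside the evolved unknowns and estimate a quantity living on the rotation group.

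Finally, to upgrade $L^2$ decay to the $H^1$ bound \eqref{eq:ISS}, I would repeat the estimate for $\partial_t\tilde y$, which satisfies the same hyperbolic system with the same dissipative boundary conditions (time-differentiating $\tilde\phi(\ell,t)=-\Gamma\tilde\eta(\ell,t)$ and $\tilde\eta(0,t)=0$ preserves their structure), and set $V=V_0+V_1$ with $V_1=\frac12\int_0^\ell(\partial_t\tilde y)^\T\Lambda(\partial_t\tilde y)\,ds$; the missing spatial derivative $\partial_s\tilde y$ is then recovered algebraically from the PDE since $\bar A$ is invertible, so $V$ is equivalent to $\|\tilde y\|_{H^1}^2$. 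The time derivatives of $\bar G$ and $\bar H$ introduce $\partial_t y$ and $\partial_t d$; I would eliminate $\partial_t y$ through the plant equation \eqref{eq:PDE system}, so that it is controlled by $\|y\|_{H^1}$ (consistent with the statement using the $H^1$ norm of $y$), while $\partial_t d$ is already accounted for in $\|d\|_H$. Collecting the estimates produces a differential inequality of the form $\dot V\le -2\lambda V + c(k_0,k_1)\,V + (\text{terms in }\|y\|_{H^1}\text{ and }\|d\|_H)$, and a comparison/Gronwall argument kept inside the smallness region — closed by a standard continuity/bootstrap argument using the assumed well-posedness in $C^0([0,T];H^1(0,\ell))$ — delivers \eqref{eq:ISS}.
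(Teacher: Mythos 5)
Your proposal is correct and follows essentially the same route as the paper: a modulated quadratic Lyapunov functional on $\tilde y$ and $\partial_t\tilde y$ exploiting the boundary dissipation from $\Gamma>0$, Sobolev/H\"older absorption of the quadratic and bilinear terms under smallness, the crucial estimate $\|\tilde R\|_{L^\infty}\lesssim\|\tilde u\|$ obtained by integrating $\partial_s\tilde R=\hat R\,\tilde u\cross+\tilde R\,u\cross$ from $\tilde R(0,t)=0$, recovery of $\partial_s\tilde y$ from the equation to get $H^1$ equivalence, and a Gronwall/LISS conclusion. The only cosmetic difference is that you sketch constructing the spatial weight directly via exponentially weighted Riemann invariants \`a la Bastin--Coron, whereas the paper imports the ready-made weight matrix $P(s)$ (with its four properties) from the boundary-stabilization result of \cite{rodriguez2022networks} --- the same mechanism in substance.
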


\begin{proof}
The proof is included in the Appendix.
\end{proof}

On the right-hand side of \eqref{eq:ISS}, the first term is due to the initial estimation error and decays exponentially.
The last two terms are proportional to certain norms of the actual states and inputs and, therefore, are always bounded in practice.
Although the theoretical convergence is local, numerical studies suggest that the domain of attraction is quite large.


\black{
\section{Implementation}
\label{section:implementation}

In this section, we illustrate how to implement our boundary observer using a tendon-driven continuum robot in Fig.~\ref{fig:continuum robot} as an example.
We assume the robot has a spring steel backbone and 20 equally spaced disks.
The robot is subject to gravity, the actuation of two tendons, and a load at the tip. 
The base is fixed.
First, we illustrate how to determine the model parameters $\{J(s),K(s)\}$, boundary conditions $\{\eta_-(t),\Psi_+(t)\}$, and distributed inputs $\{\phi_{\text{loc}}(s,t),\psi_{\text{loc}}(s,t),\psi_{\text{glb}}(s,t)\}$. 
Then, we comment on the numerical aspects.

\textit{Model Coefficients.} 
Denote the backbone radius by $r(s)$, density by $\rho(s)$, Young's modulus by $E(s)$, and Shear modulus by $G(s)$.
Note that the density usually needs to be adjusted to include the weights of the disks.
Denote the angular and linear inertia (stiffness) matrices by $J_1$ and $J_2$ ($K_1$ and $K_2$).
Assuming the $x$-axis aligns with the longitudinal direction of the robot, then \cite{rucker2011statics}
\begin{align*}
    J_1 & =\mathrm{diag}(2,1,1)\rho\pi r^4/4, \quad J_2  =I_{3\times3}\rho\pi r^2 , \\
    K_1 & =\mathrm{diag}(2G,E,E)\pi r^4/4, \quad K_2  =\mathrm{diag}(E,G,G)\pi r^2, \\
    J & =\mathrm{diag}(J_1,J_2),\quad K =\mathrm{diag}(K_1,K_2).
\end{align*}

\textit{Boundary Conditions.}
The fixed base implies $\eta_-(t)=0$.
The loaded tip means that $\Psi_+(t)$ consists of the gravity of the load converted into the local frame.

\textit{Distributed Inputs.}
The robot is subject only to gravity and tendon actuation.
Hence, $\psi_\text{loc}(s,t)=0$ and $\psi_\text{glb}(s,t)$ consists of only the gravity of the robot.
To determine $\phi_{\text{loc}}(s,t)$, we need an actuator model that calculates the generated wrench field from the actuator reading, such as fluidic pressure or tendon tension, which can be found in \cite{renda2020geometric}.
The general actuator model depends on the current state of the robot.
However, for tendon-driven continuum robots, this dependence is negligible (see Section III in \cite{boyer2022statics}).
In this case, the actuator model is given as follows.
Let $D_i(s)$ be the position of the intersection point of the tendon $i$ with the $s$-cross-section of the rod in the $s$-cross-sectional frame. 
Let $\tau_i(t)$ be the tendon force, which always takes a negative value.
Then
\begin{align}\label{eq:actuator model}
    \phi_{\text{loc}}(s,t)=\sum_{i=1}^2
    \begin{bmatrix}
        D_i(s)\cross T_i(s) \\
        T_i(s)
    \end{bmatrix}\frac{\tau_i(t)}{\|T_i(s)\|}
\end{align}
where $T_i=q_o+u_o\cross D_i+D_i'$ is the tangent of the tendon, and $q_o$ and $u_o$ are the reference strains.
Therefore, once the routing of the tendons is known, $\phi_{\text{loc}}(s,t)$ is uniquely determined by the tendon forces $\tau_i(t)$.

\textit{Measurements.}
The tip velocity measurement $\eta(\ell,t)$ includes linear and angular velocities.
The angular velocity can be obtained by installing an IMU at the tip.
The linear velocity can be obtained using a motion capture system.

\textit{Numerical Implementation.}
Since our observer is essentially a Cosserat rod with a virtual tip wrench, it can be implemented using any numerical method for Cosserat rod models, such as those based on finite difference \cite{gazzola2018ForwardInverseProblems}, finite element \cite{grazioso2019geometrically}, strain parameterization \cite{renda2018discrete, boyer2020dynamics}, and shooting methods \cite{till2019real, boyer2022statics}.
We only need to define an extra tip wrench in these numerical methods.
Some of these methods, such as shooting methods \cite{till2019real}, have been shown to be real-time.
Hence, it is also promising to implement our observer algorithms in real time.

\begin{figure}[h]
    \centering
    \includegraphics[width=0.6\columnwidth]{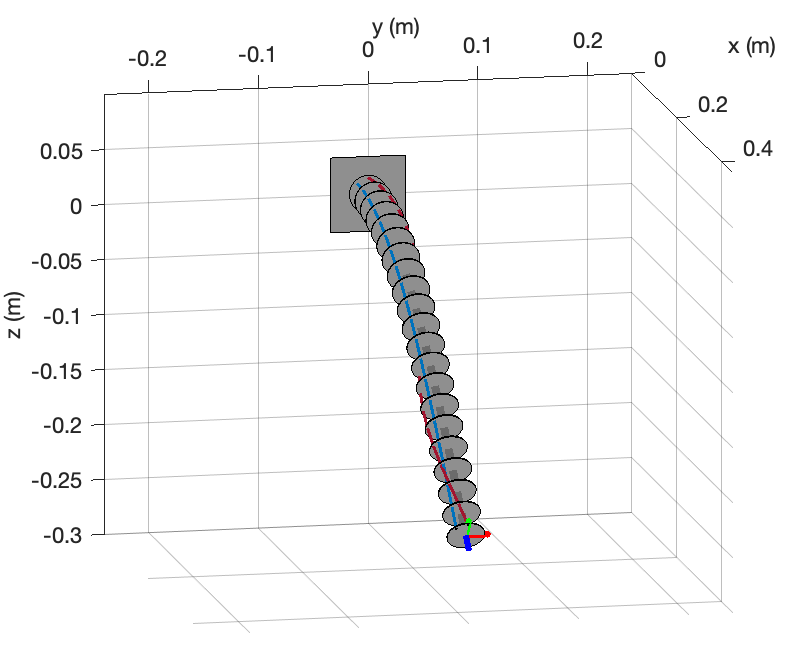}
    \caption{A continuum robot subject to gravity, actuation of two tendons, and a tip load of $1~\mathrm{N}$ (not plotted).}
    \label{fig:continuum robot}
\end{figure}

\begin{figure}[h]
    \centering
    \includegraphics[width=0.7\columnwidth]{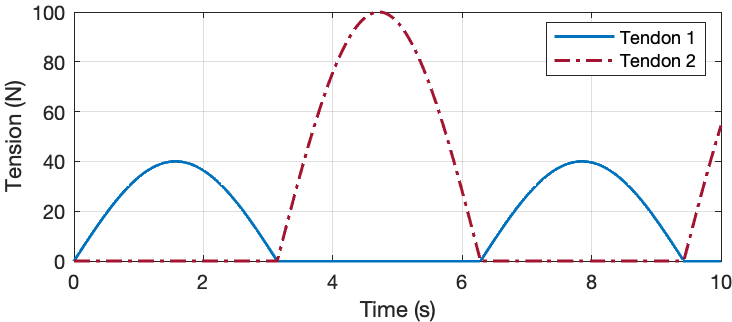}
    \caption{Tension of the tendons.}
    \label{fig:tendon tension}
\end{figure}

\begin{figure}[htb!]
    \centering
    \begin{subfigure}[b]{0.325\columnwidth}
        \centering
        \includegraphics[width=\textwidth]{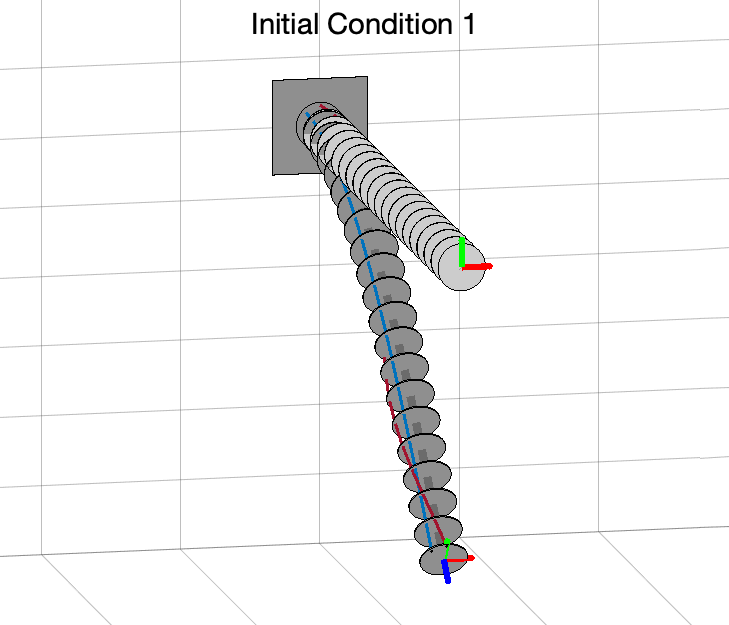}
    \end{subfigure}
    \begin{subfigure}[b]{0.325\columnwidth}
        \centering
        \includegraphics[width=\textwidth]{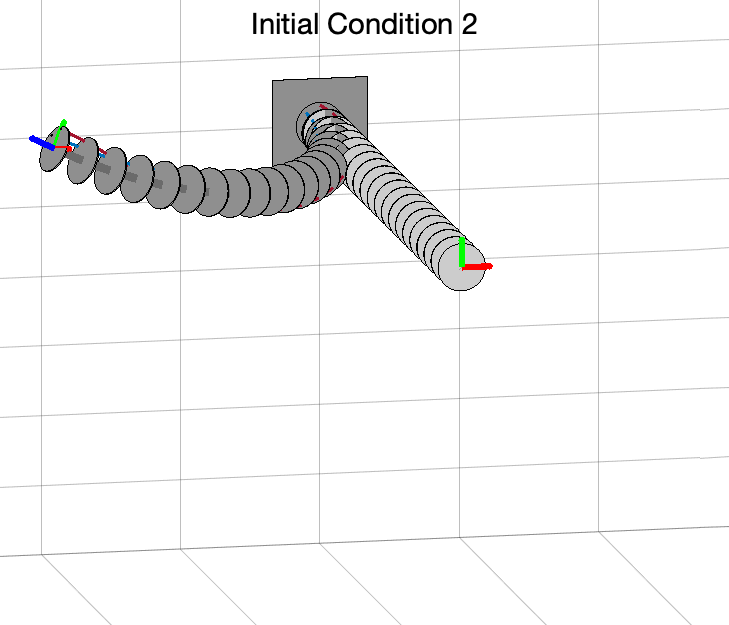}
    \end{subfigure}
    \begin{subfigure}[b]{0.325\columnwidth}
        \centering
        \includegraphics[width=\textwidth]{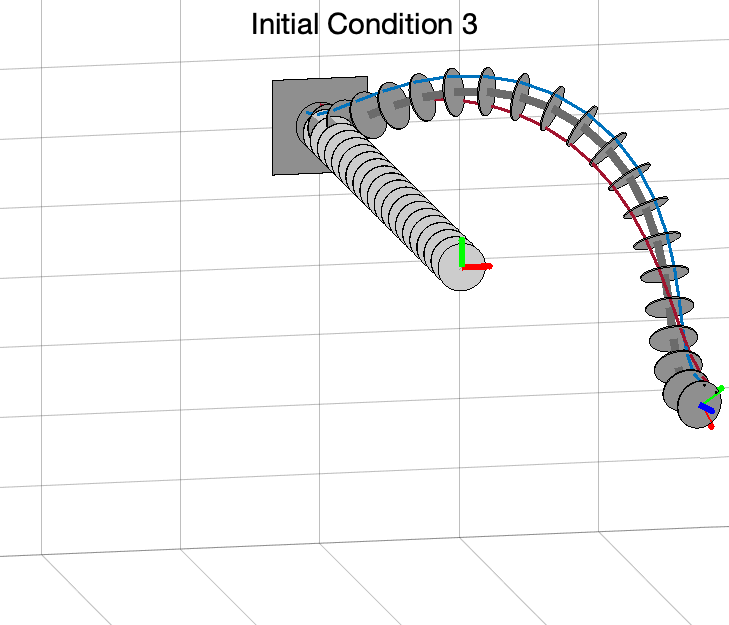}
    \end{subfigure}
    \caption{Three initial conditions of the actual states (dark gray) and the observer (light gray).}
    \label{fig:initial condition}
\end{figure}

\textit{Observer Gain.}
The observer gain $\Gamma$ is the only free parameter in our algorithm.
The estimation errors are convergent as long as $\Gamma$ is positive definite.
Theoretically, a larger $\Gamma$ results in faster convergence.
However, in numerical implementation, choosing a large $\Gamma$ means injecting a large wrench and may require smaller discretization steps for numerical stability.
Thus, selecting $\Gamma$ for real-time state estimation necessarily involves a compromise between the discretization in space and time, the computation speed, and the convergence speed of estimation errors.
This compromise also depends on the specific numerical method used and is currently under study.}

\section{Simulation study}
\label{section:simulation}

\black{In this section, we conduct extensive simulation studies on the performance and robustness of the presented observer algorithm using SoRoSim \cite{mathew2022sorosim}.
SoRoSim is a MATLAB simulator that implements a discretization scheme based on strain parameterization \cite{boyer2020dynamics}.
However, any other discretization scheme can be used \cite{gazzola2018ForwardInverseProblems, grazioso2019geometrically, renda2018discrete, till2019real, boyer2022statics}.

\begin{table}[h]
\caption{Robot parameters.}
\small
\setlength{\tabcolsep}{4pt}
\renewcommand{\arraystretch}{1.2}
    \centering
    \begin{tabular}{rl}
        \hline
        Length & $0.5~\mathrm{m}$ \\
        Radius & $1~\mathrm{mm}$ \\
        Density (including the disks) & $1.6\times10^4~\mathrm{kg/m^3}$ \\
        Young's modulus & $207~\mathrm{Gpa}$ \\
        Shear modulus & $79.6~\mathrm{Gpa}$ \\
        \hline
    \end{tabular}
    \label{tab:robot parameter}
\end{table}

\begin{figure*}[h]
    \centering
    \begin{subfigure}[b]{0.2\textwidth}
        \centering
        \includegraphics[width=\textwidth]{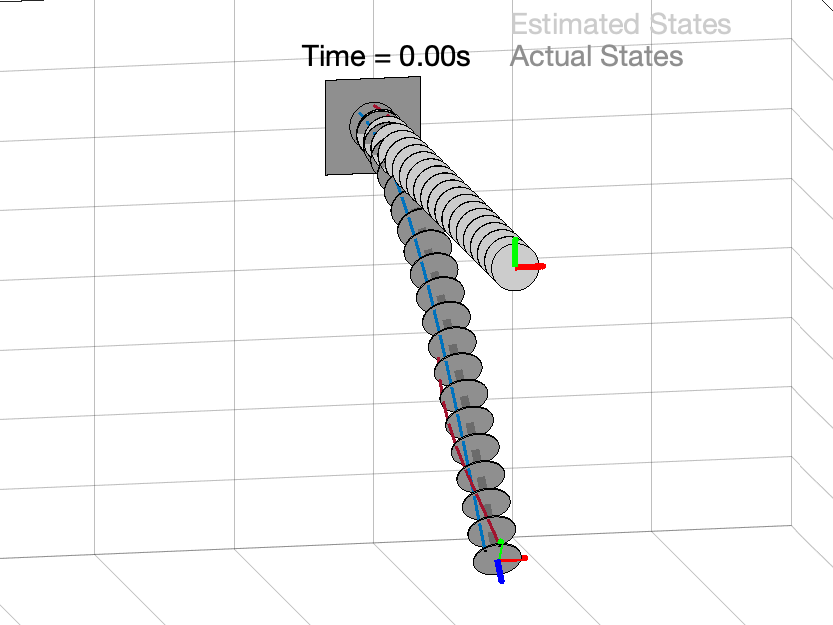}
    \end{subfigure}
    \begin{subfigure}[b]{0.2\textwidth}
        \centering
        \includegraphics[width=\textwidth]{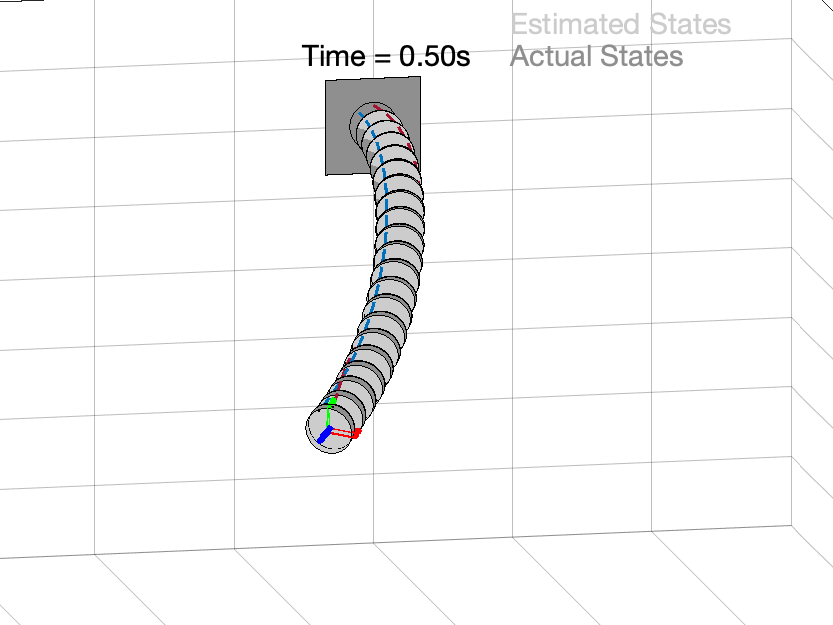}
    \end{subfigure}
    \begin{subfigure}[b]{0.2\textwidth}
        \centering
        \includegraphics[width=\textwidth]{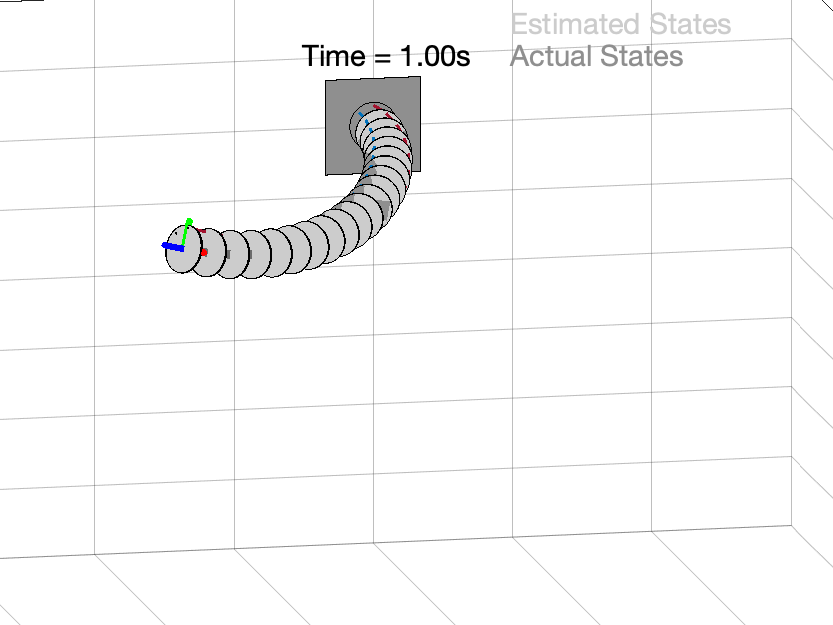}
    \end{subfigure}
    \begin{subfigure}[b]{0.2\textwidth}
        \centering
        \includegraphics[width=\textwidth]{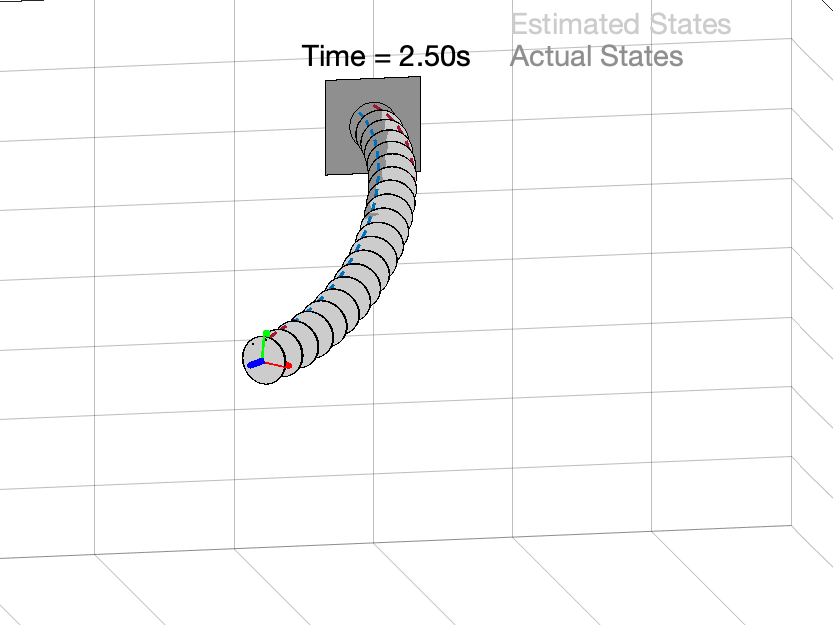}
    \end{subfigure}
    
    \begin{subfigure}[b]{0.2\textwidth}
        \centering
        \includegraphics[width=\textwidth]{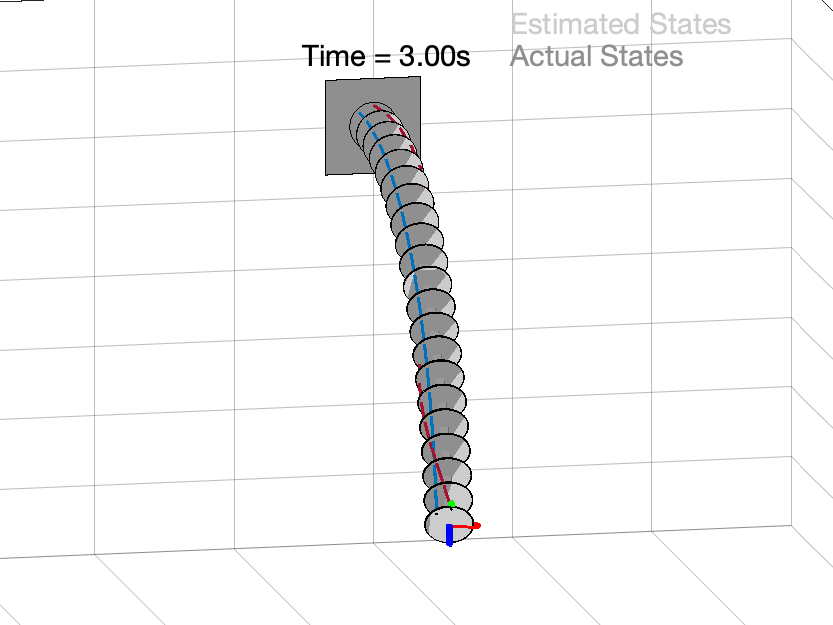}
    \end{subfigure}
    \begin{subfigure}[b]{0.2\textwidth}
        \centering
        \includegraphics[width=\textwidth]{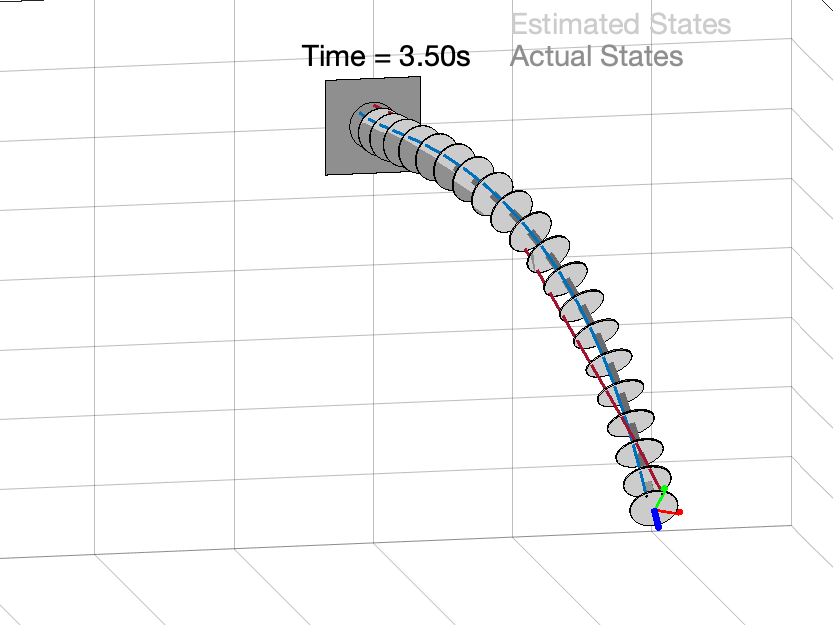}
    \end{subfigure}
    \begin{subfigure}[b]{0.2\textwidth}
        \centering
        \includegraphics[width=\textwidth]{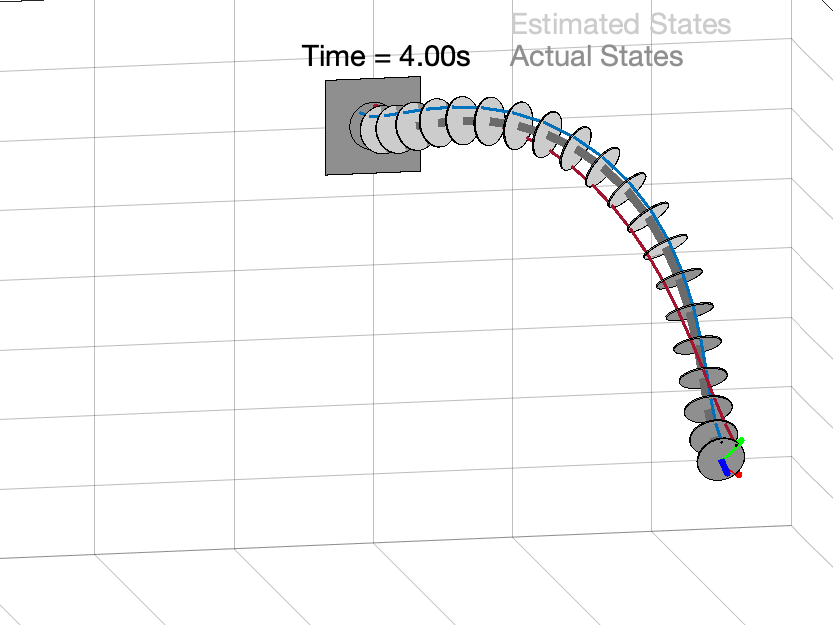}
    \end{subfigure}
    \begin{subfigure}[b]{0.2\textwidth}
        \centering
        \includegraphics[width=\textwidth]{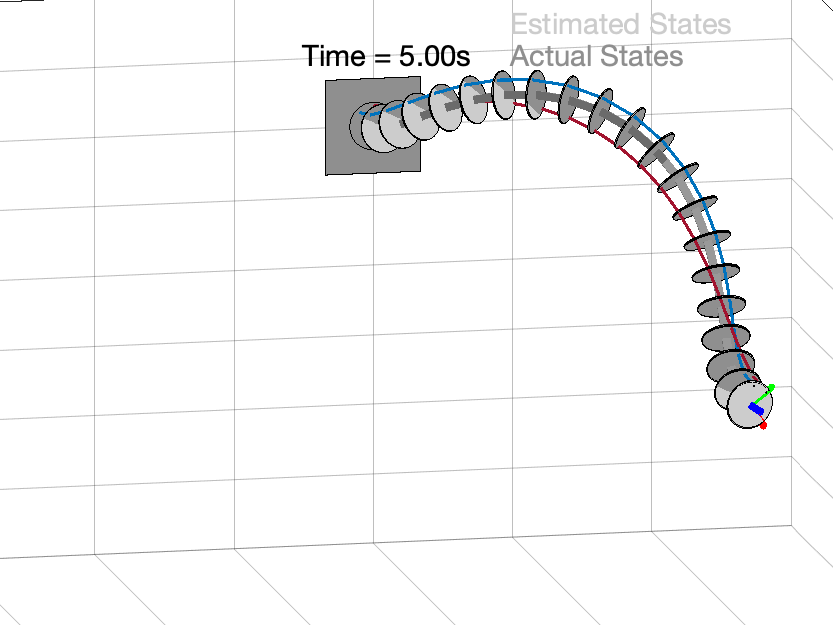}
    \end{subfigure}
    \caption{Comparison of the estimated configuration (light gray) and actual configuration (dark gray) for initial condition 1. After 0.5 seconds, the estimated configuration converged to the actual configuration and thus they overlapped.}
    \label{fig:convergence comparison}
\end{figure*}

The robot parameters are given in Table~\ref{tab:robot parameter}, adopted from \cite{rucker2011statics, till2019real}.
After including the weights of the disks, the density was about twice the density of spring steel.
Tendon 1 is parallel, and its relative position is given by $D_1(s)\equiv[0~-0.01~0.01]$.
Tendon 2 is helical, and its relative position is given by $D_2(s)\equiv[0~0.15\sin(4\pi s)~0.15\cos(4\pi s)]$.
The two tendons were pulled alternately to generate complicated 3D motions.
Their tensions were specified according to
\begin{align*}
    & \tau_1(t)=-[40\sin(t)]_+~\mathrm{N}, \\
    & \tau_2(t)=[100\sin(t)]_-~\mathrm{N},
\end{align*}
respectively, where $f_+$ and $f_-$ represent the positive and negative parts of the function $f$.

In each simulation study later, SoRoSim was run twice.
In the first run, it was used as a robot simulator to generate the ``true'' states (pose, strain, velocity).
The physical parameters and states were saved as the ground truth for comparison.
In the second run, it was used as a numerical method to implement our observer.
We used the same physical parameters but sometimes added perturbations.
We always started with a ``wrong'' initial estimate to demonstrate the convergence of our observer.
We used the saved tip velocity from the first run as the ``measurement'' and injected it into the observer in the second run.
If the computed solution of the second run converges to the saved solution of the first run, it means that the observer can recover the unknown states of the ``robot''.}

\black{To investigate the domain of attraction, the actual states began from three different initial configurations; see Fig.~\ref{fig:initial condition}.
Our observer's initial estimate was always set to a straight configuration, resulting in a significant initial estimation error.
The observer gain was $\Gamma=0.01I_{6\times6}$.

\textit{Result.}
A series of snapshots of the convergence process for initial condition 1 is plotted in Fig.~\ref{fig:convergence comparison}.
The robot first exhibited a simple bending motion due to the parallel tendon and then a complex twisting motion due to the helical tendon.
We observed that with accurate models and measurements, the estimates converged to the actual states in 0.5 seconds and exhibited close tracking of the actual states.
We also plotted the ground truth trajectories and estimated trajectories of various state variables, including the position (Fig.~\ref{fig:position tracking}), Euler angles (Fig.~\ref{fig:rotation tracking}), linear velocity (Fig.~\ref{fig:linear velocity tracking}), and angular velocity (Fig.~\ref{fig:angular velocity tracking}) at the tip, and the angular strains (Fig.~\ref{fig:angular strain tracking}) at the midpoint location.
(The linear strains of a spring backbone are negligible and hence were not plotted).
We observed that all estimates converged to the ground truth, including velocities.
This is a strong result and has not been achieved in existing work.
In Figs. \ref{fig:linear velocity tracking}-\ref{fig:angular velocity tracking}, both the actual and estimated velocities exhibited high-frequency oscillations.
Note that this issue did not arise from our observer design.
Instead, it was due to our assumption of the linear constitutive law \eqref{eq:linear constitutive law}, which necessarily resulted in high-frequency oscillations in the actual robot states.
Despite the oscillations, the estimates showed close tracking of the actual states.
This was confirmed by the $L^\infty$ norms of estimation errors plotted in Fig.~\ref{fig:estimation error}. 
We observed that the estimation errors of all robot states converged to a small neighborhood of zero in 0.5 seconds.
The steady-state error of the velocities observed after 1.5 seconds occurred during high-frequency oscillations of the robot's actual states, which made the estimation process much harder. 
However, this simulation result was consistent with our theoretical result in Theorem~\ref{thm:stability}, which stated that the estimation error would converge to a small region bounded by the actual states.
Fig.~\ref{fig:estimation error} would be used as a baseline to discuss the robustness of our observer.

Next, we conducted a series of independent simulation studies to study the impact of different factors.
For each study, we performed simulations for the three initial conditions in Fig.~\ref{fig:initial condition}.
We would only plot the $L^\infty$ norms of estimation errors.

\begin{figure}[!htb]
    \centering
    \includegraphics[width=0.9\columnwidth]{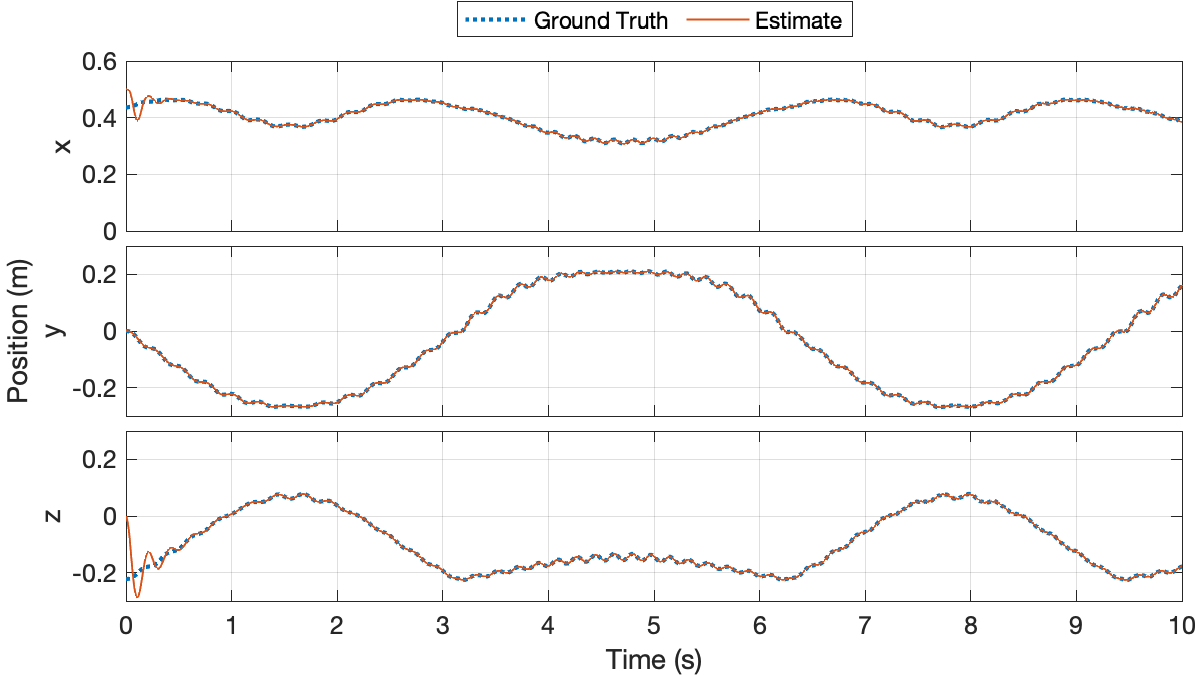}
    \caption{Ground truth and estimates of the tip position starting with initial condition 1.}
    \label{fig:position tracking}
\end{figure}

\begin{figure}[!htb]
    \centering
    \includegraphics[width=0.9\columnwidth]{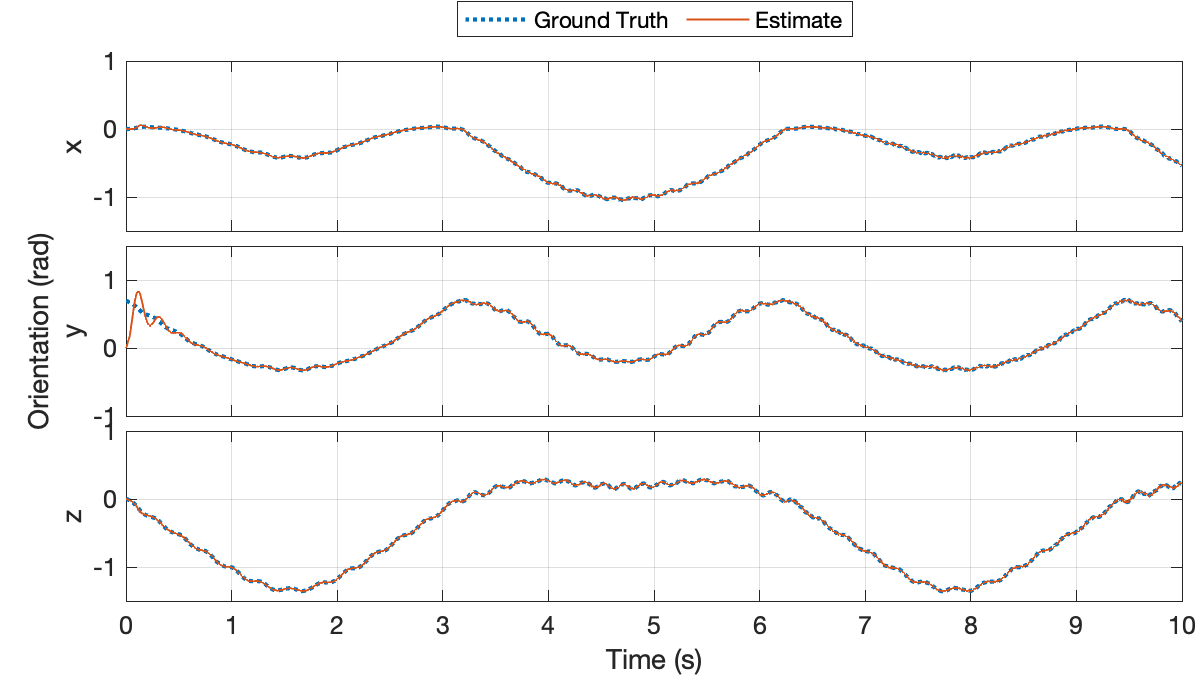}
    \caption{Ground truth and estimates of the tip orientation starting with initial condition 1.}
    \label{fig:rotation tracking}
\end{figure}

\begin{figure}[!htb]
    \centering
    \includegraphics[width=0.9\columnwidth]{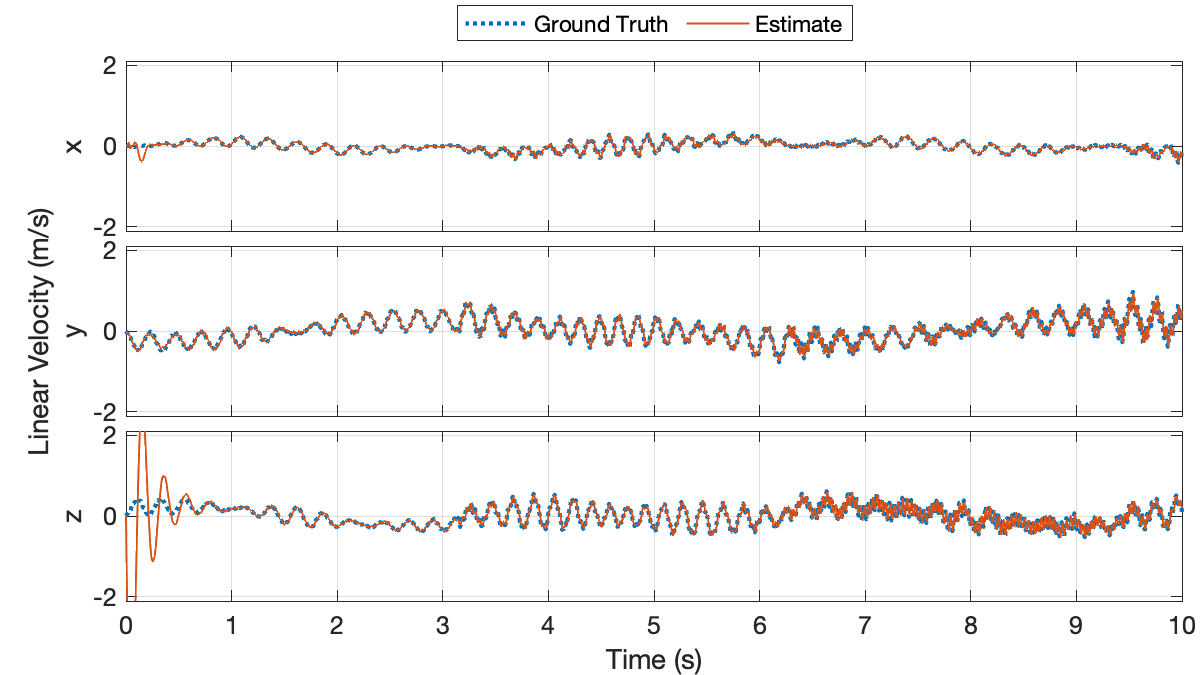}
    \caption{Ground truth and estimates of the tip linear velocity starting with initial condition 1.}
    \label{fig:linear velocity tracking}
\end{figure}

\begin{figure}[t]
    \centering
    \includegraphics[width=0.9\columnwidth]{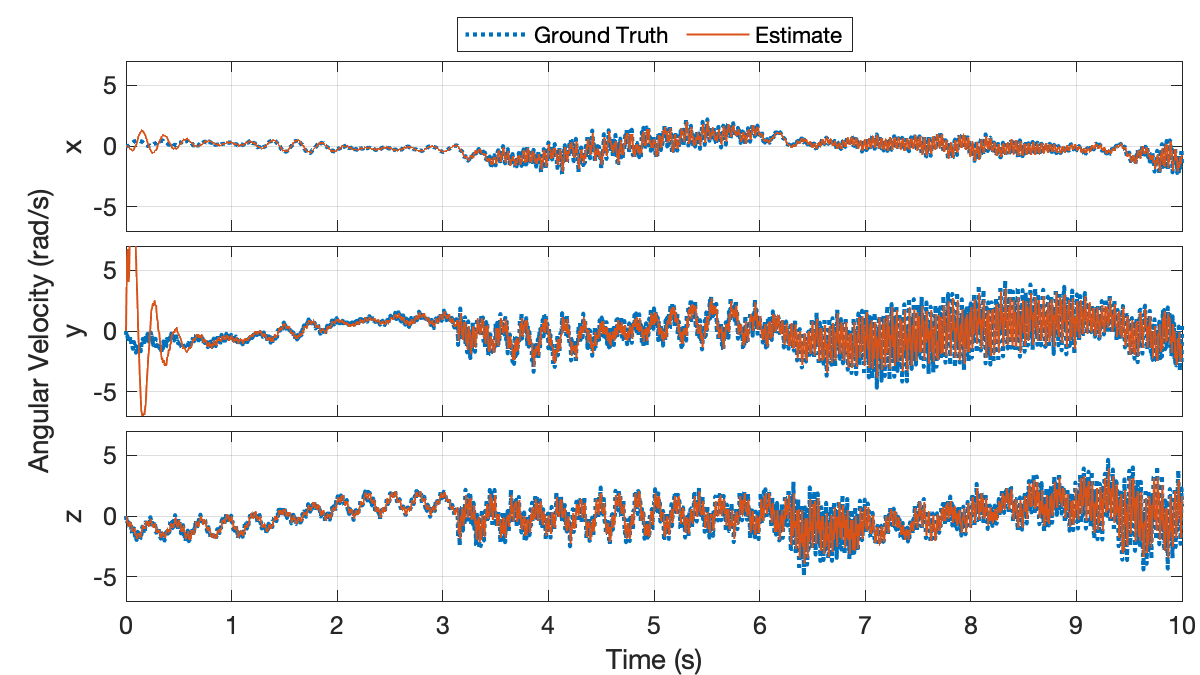}
    \caption{Ground truth and estimates of the tip angular velocity starting with initial condition 1.}
    \label{fig:angular velocity tracking}
\end{figure}

\begin{figure}[!htb]
    \centering
    \includegraphics[width=0.9\columnwidth]{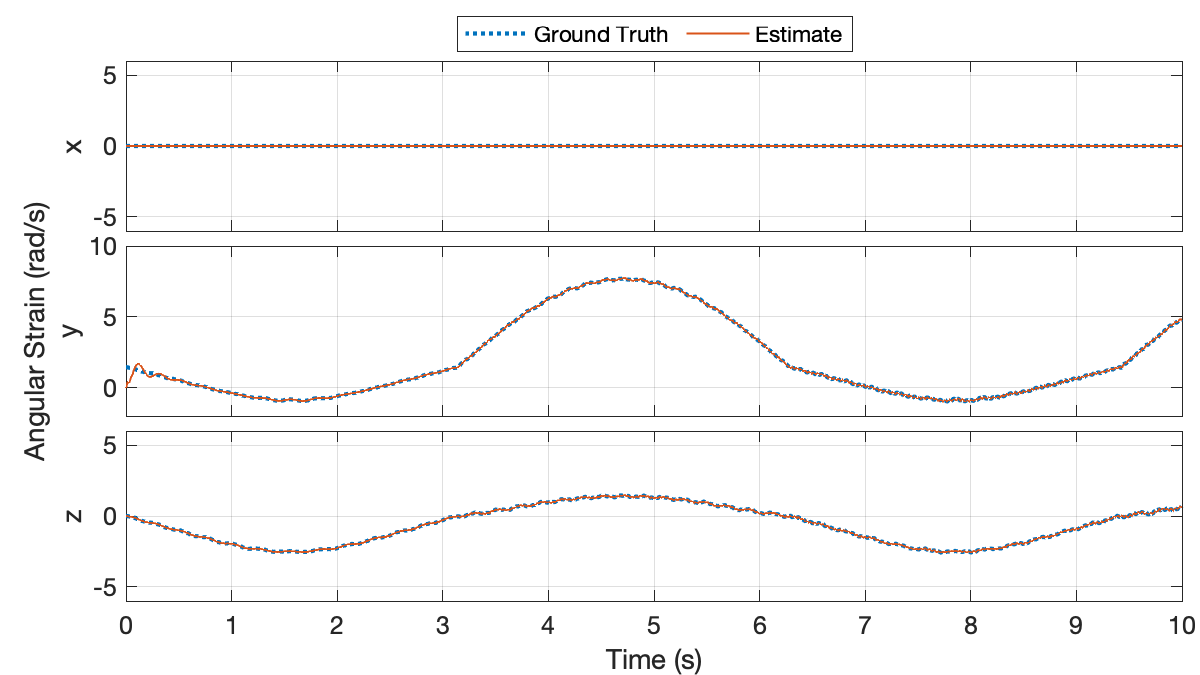}
    \caption{Ground truth and estimates of the angular strain at the midpoint location starting with initial condition 1. A spring backbone has negligible torsion so the $x$-coordinate is zero.}
    \label{fig:angular strain tracking}
\end{figure}


\subsection{Study 1: Impact of Tip Measurement Noise}
This section studies the impact of tip velocity measurement noise on the observer. 
At every $t$, independent noise was drawn from a uniform distribution on the interval $[-20\%,20\%]$ of the maximum magnitude of the tip linear/angular velocity and then added to each component of the tip linear/angular velocity measurements.
Note that this is a significant level of noise.

\textit{Result.} 
The $L^\infty$ norms of estimation errors are shown in Fig.~\ref{fig:estimation error noise}.
We observed that tip measurement noise propagated to all robot state estimates, especially on the velocity estimates.
However, the steady-state estimation errors still remained in a small region of zero.
This suggests that the boundary observer is robust to tip measurement noise.

\subsection{Study 2: Impact of Robot Modeling Errors}
This section studies the impact of robot modeling errors.
Modeling errors can occur in all physical parameters of the robot, such as its radius, density, Young's modulus, and shear modulus.
Since there were too many physical parameters, we directly added perturbations to the parameters $J(s)$ and $K(s)$, assuming that this was the consequence of errors in the physical parameters.
The perturbed parameters were set to
\begin{align*}
    \Bar{J}(s) & =J(s)*(1+0.2*\sin(20s)), \\
    \Bar{K}(s) & =K(s)*(1+0.2*\sin(20s)).
\end{align*}
We used $J,K$ to compute the actual states and $\Bar{J},\Bar{K}$ to compute the estimates.

\textit{Result.} 
The $L^\infty$ norms of estimation errors are shown in Fig.~\ref{fig:estimation error JK}.
We observed that modeling errors on the physical parameters also led to additional steady-state errors in the estimation.
However, these errors still remained close to zero.
This suggests that the boundary observer is robust to robot modeling errors as well.

\begin{figure}[!htb]
    \centering
    \includegraphics[width=0.9\columnwidth]{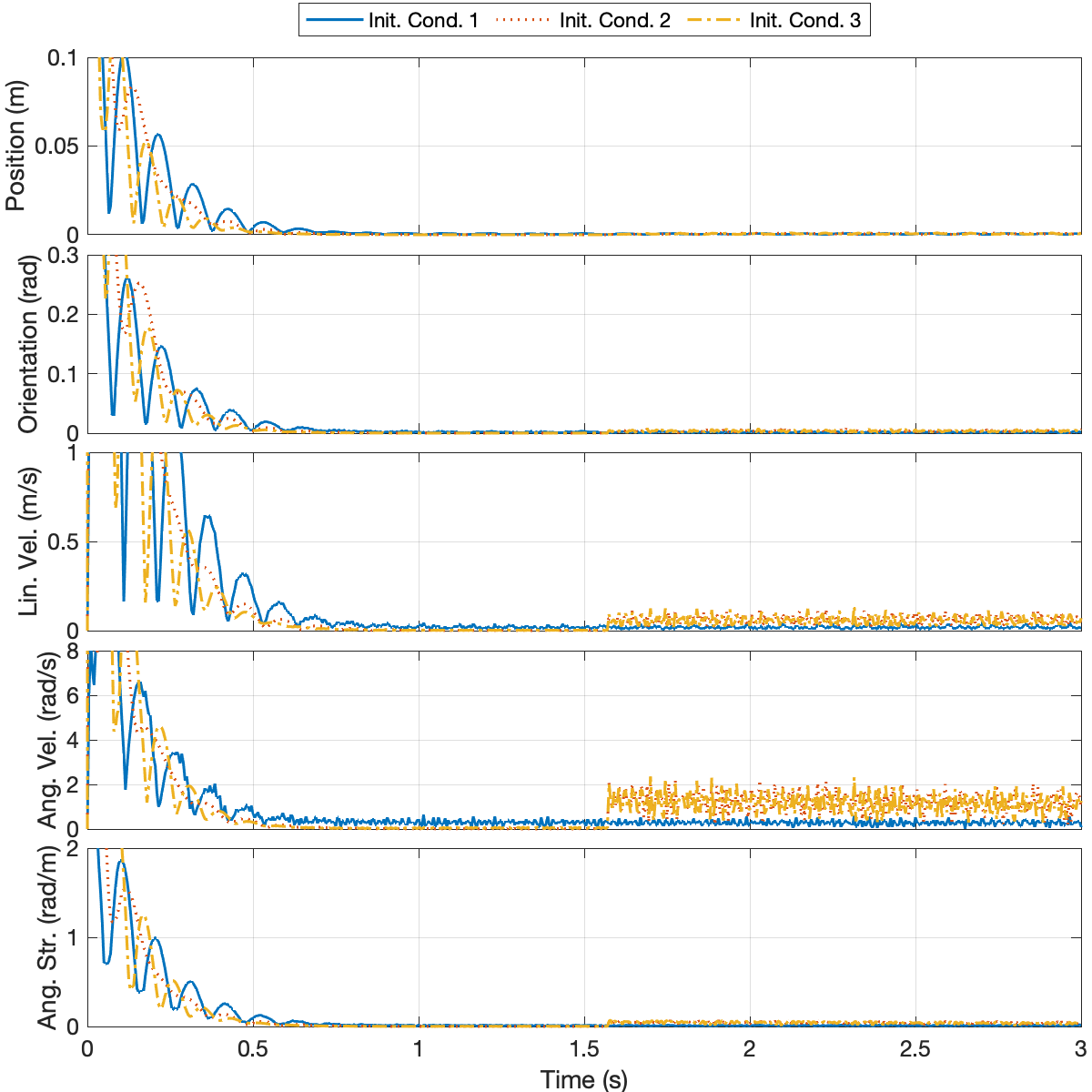}
    \caption{Baseline estimation errors for three initial conditions.}
    \label{fig:estimation error}
\end{figure}

\begin{figure}[!htb]
    \centering
    \includegraphics[width=0.9\columnwidth]{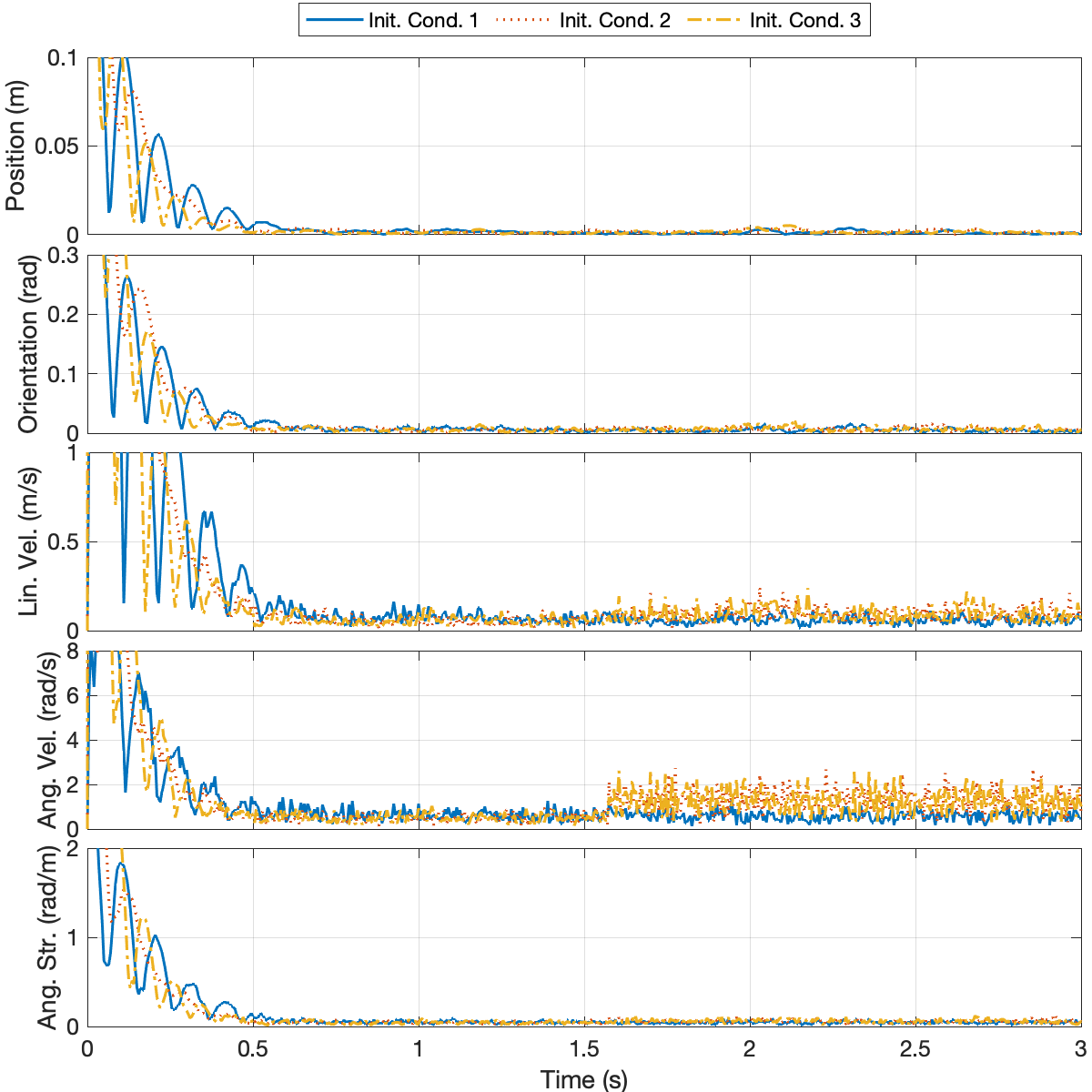}
    \caption{Estimation errors for Study 1 in the presence of tip measurement noise.}
    \label{fig:estimation error noise}
\end{figure}

\begin{figure}[!htb]
    \centering
    \includegraphics[width=0.9\columnwidth]{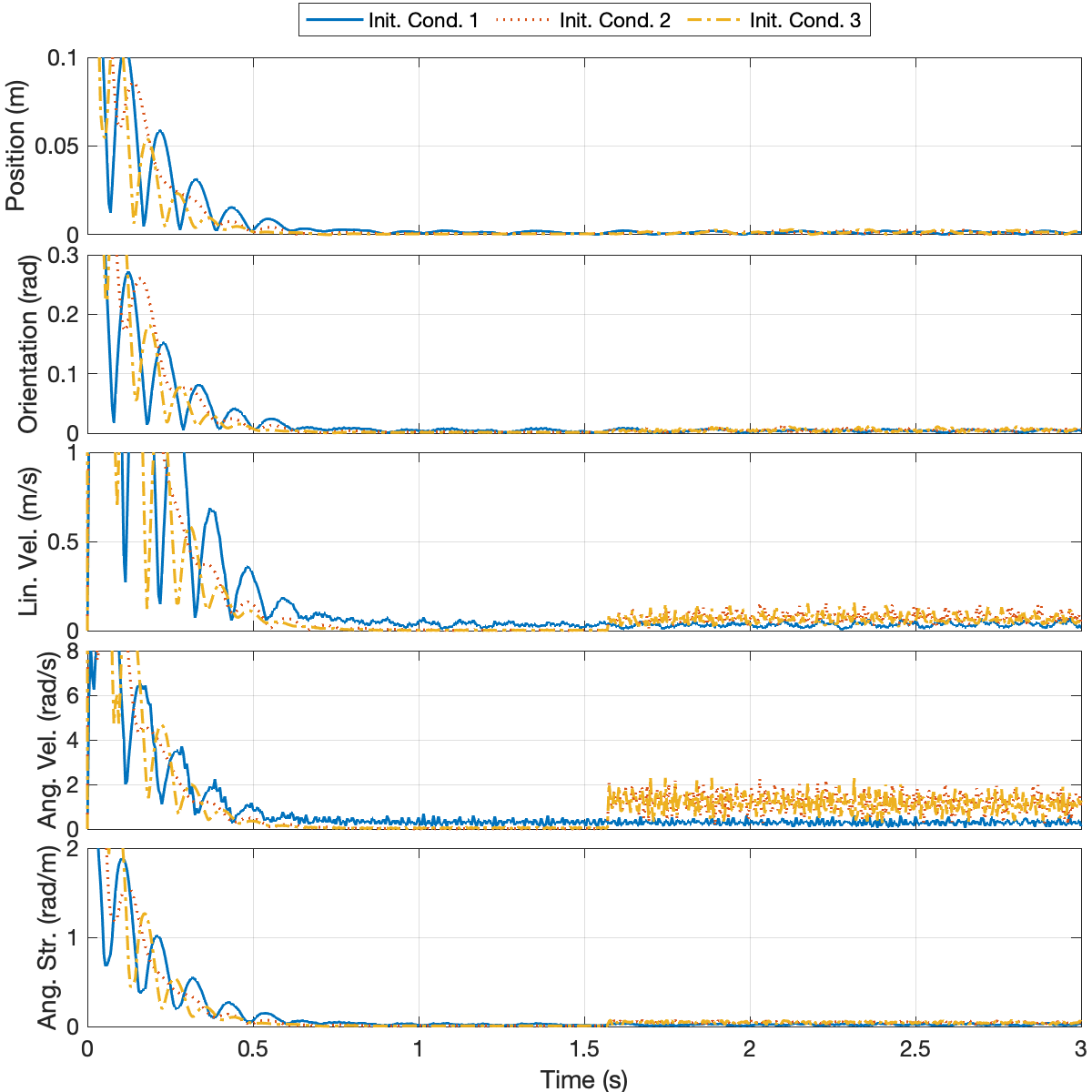}
    \caption{Estimation errors for Study 2 in the presence of modeling errors on the robot parameters $J(s),K(s)$.}
    \label{fig:estimation error JK}
\end{figure}

\begin{figure}[!htb]
    \centering
    \includegraphics[width=0.9\columnwidth]{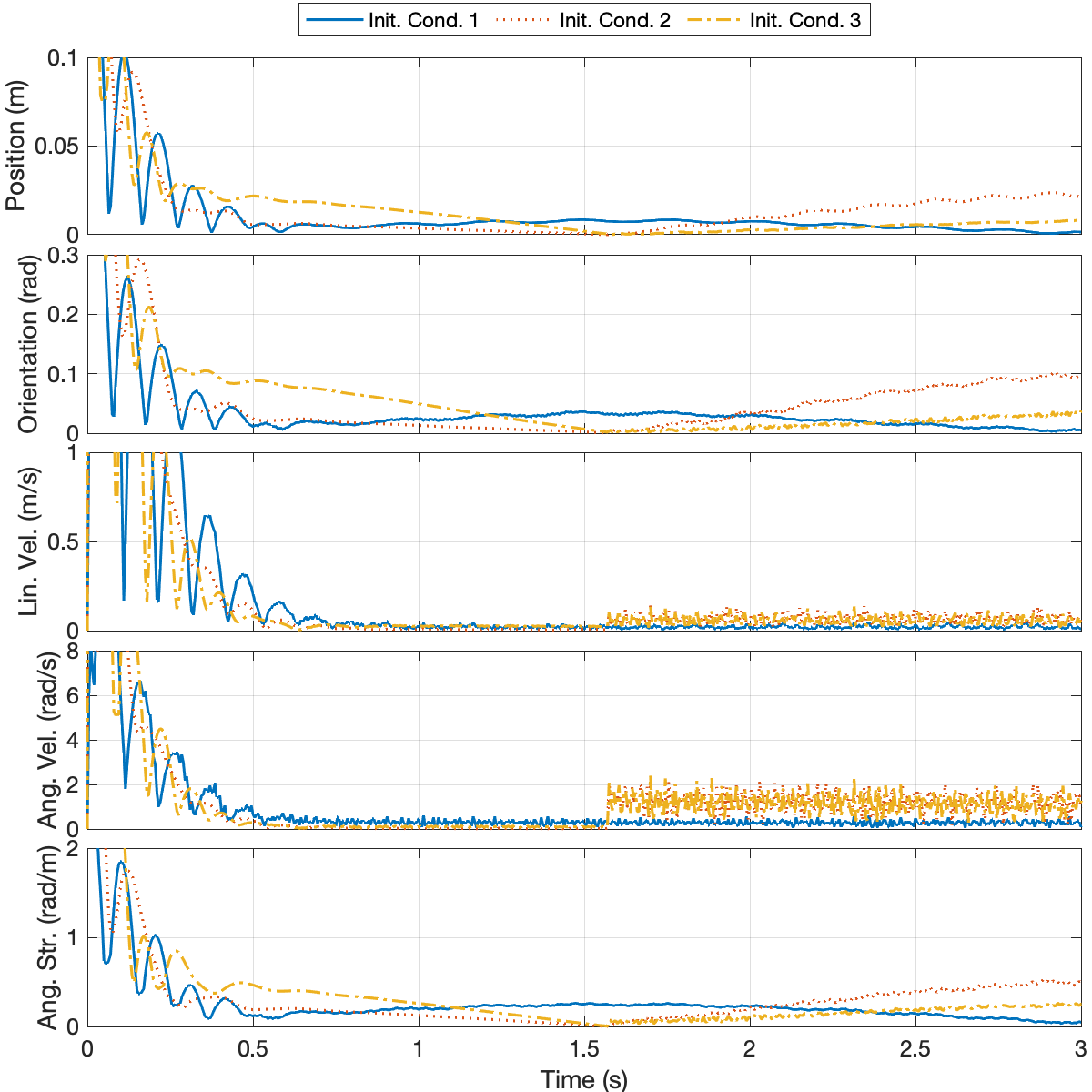}
    \caption{Estimation errors for Study 3 in the presence of modeling errors on the tendons routing parameter $D_i(s)$.}
    \label{fig:estimation error tendon}
\end{figure}

\subsection{Study 3: Impact of actuator modeling errors}
This section studies the impact of actuator modeling errors.
For a tendon-driven robot, actuator modeling errors can occur in the tendons routing parameter, such as its relative position to the backbone.
Hence, we added perturbations to the tendon positions $D_i(s)$.
The perturbed tendon positions were assumed
\begin{align*}
    \bar{D}_i(s) & =D_i(s)*(1+0.1*\sin(20s)).
\end{align*}
We used $D_i$ to compute the actual states and $\Bar{D}_i$ to compute the estimates.

\textit{Result.} 
The $L^\infty$ norms of estimation errors are shown in Fig.~\ref{fig:estimation error tendon}.
We observed that modeling errors on the tendons routing parameter introduced a notable amount of steady-state estimation errors compared to previous studies, especially on the positions, orientations, and angular strains.
This is not surprising because the central idea of our boundary observer is to dissipate energy from the estimation errors, and inaccurate actuator models may inject extra energy into the estimation errors.}




\subsection{Summary}
In summary, although the theoretical stability is only local, numerical studies suggest that the domain of attraction is large.
The observer is more sensitive to modeling errors on the actuator model compared to the tip measurement noise and modeling errors on the robot parameters.
Future work will study the use of additional measurements (like position) to compensate for actuator modeling errors.

\section{Conclusion}
\label{section:conclusion}

In this work, we designed a boundary observer for continuum robotic arms based on Cosserat rod PDEs.
This observer was able to estimate all infinite-dimensional continuum robot states from only tip velocity measurements, could be easily implemented in both the hardware and numerical aspects and was proven to be locally input-to-state stable.
Extensive numerical simulations suggested that the domain of attraction was large and the observer was robust to tip velocity measurement noise and robot modeling errors.
These results suggested the promising role of PDE control theory for soft robots.
Our future work includes theoretical studies on the robustness of the observer, using additional sensing data to improve the robustness, and testing it on physical platforms.

\section*{Appendix}

\subsection{Input-to-State Stability}

\black{Input-to-state stability (ISS) is a concept used to analyze nonlinear control systems with external inputs \cite{sontag1989smooth}. 
The extension to infinite-dimensional control systems, such as those of partial differential equations, was introduced by \cite{dashkovskiy2013input}.

Let $\left(X,\|\cdot\|_X\right)$ and $\left(U,\|\cdot\|_{U}\right)$ be the state space and the input space, endowed with norms $\|\cdot\|_X$ and $\|\cdot\|_{U}$, respectively.
Denote $U_c=PC(\mathbb{R}_+;U)$, the space of piecewise right-continuous functions from $\mathbb{R}_+$ to $U$, equipped with the sup-norm.
Define the following classes of comparison functions:
\begin{align*}
    \mathcal{P}&:=\{\gamma: \mathbb{R}_+ \to \mathbb{R}_+ \mid \gamma\text{ is continuous},~\gamma(0)=0, \\
    &\qquad\text{and }\gamma(r)>0, ~\forall r>0\} \\
    \mathcal{K} &:=\{\gamma\in\mathcal{P} \mid \gamma\text{ is strictly increasing}\} \\
    \mathcal{K}_{\infty} &:=\{\gamma\in\mathcal{K} \mid \gamma\text{ is unbounded}\}  \\
    \mathcal{L} &:=\{\gamma:\mathbb{R}_+\to\mathbb{R}_+ \mid \gamma\text{ is continuous and strictly } \\
    &\qquad\text{decreasing with }\lim_{t\to\infty}\gamma(t)=0\}  \\
    \mathcal{KL} &:=\{\beta:\mathbb{R}_+\times\mathbb{R}_+\to\mathbb{R}_+ \mid \beta(\cdot,t)\in\mathcal{K},~\forall t\geq0, \\
    &\qquad\beta(r,\cdot)\in\mathcal{L},~\forall r>0\}.
\end{align*}

Consider a control system $\Sigma=(X,U_c,\phi)$ where $\phi:\mathbb{R}_+\times X\times U_c\to X$ is a transition map.
Here, $\phi(t,x_0,u(\cdot))$ denotes the state of the system at time $t\in\mathbb{R}_+$, if its initial state was $x_0\in X$ and the input $u(\cdot)\in U_c$ was applied.
Denote $x(t)=\phi(t,x_0,u(\cdot))$.

\begin{definition} 
\label{dfn:(L)ISS}
$\Sigma$ is called \textit{locally input-to-state stable (LISS)}, if $\exists k_x,k_u>0$, $\beta\in\mathcal{KL}$, and $\gamma\in\mathcal{K}$, such that
\begin{equation}\label{eq:(L)ISS}
    \|x(t)\|_X \leq \beta(\|x(0)\|_X, t)+\gamma\Big(\sup_{0\leq\tau\leq t}\|u(\tau)\|_{U}\Big)
\end{equation}
holds for $\forall x(0):\|x(0)\|_X\leq k_x$, $\forall u:\sup_{0\leq\tau\leq t}\|u(\tau)\|_{U}\leq k_u$, and $\forall t\geq0$.
It is called \textit{input-to-state stable (ISS)}, if $ k_x=\infty$ and $ k_u=\infty$.
\end{definition} 

The (L)ISS property can be concluded by showing the existence of a so-called (L)ISS-Lyapunov functional.

\begin{definition}
\label{dfn:(L)ISS-Lyapunov function}
Let $r_x,r_u>0$.
Let $D_x=\{x\in X\mid\|x\|_X\leq r_x\}$ and $D_u=\{u\in U\mid\|u\|_U\leq r_u\}$.
A continuous functional $V:D_x\to\mathbb{R}$ is called an \textit{LISS-Lyapunov functional} for $\Sigma$, if $\exists\alpha_1,\alpha_2\in\mathcal{K}_\infty,\rho\in\mathcal{K}$, and $W\in\mathcal{P}$, such that:
\begin{align*}
    & \alpha_1(\|x\|_X) \leq V(x) \leq \alpha_2(\|x\|_X), \\
    \text{and } & \dot{V}(x) \leq-W(\|x\|_X), \quad \forall \|x\|_X \geq \rho(\|u\|_U), 
\end{align*}
$\forall x\in D_x$, $\forall u\in D_u$, and $\forall t\geq0$, where
\begin{align*}
    \dot{V}(x)=\varlimsup_{\delta t \rightarrow+0} \frac{1}{\delta t}\Big(V\big(\phi(\delta t, x, u(\cdot))\big)-V(x)\Big) .
\end{align*}
If $r_x=\infty$ and $r_u=\infty$, then $V$ is called an \textit{ISS-Lyapunov functional}.
\end{definition}

\begin{theorem}[\cite{dashkovskiy2013input, zheng2021transporting}]
\label{thm:(L)ISS-Lyapunov function}
A control system $\Sigma$ is (L)ISS if it possesses an (L)ISS-Lyapunov functional.
\end{theorem}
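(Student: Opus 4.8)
The plan is to derive the $\mathcal{KL}$-plus-gain bound \eqref{eq:(L)ISS} directly from the two inequalities in Definition \ref{dfn:(L)ISS-Lyapunov function}, following the classical Lyapunov-to-ISS argument (Sontag--Wang style) adapted to the infinite-dimensional transition-map setting and localized to the balls $D_x,D_u$. Throughout, fix an input $u(\cdot)$, write $x(t)=\phi(t,x_0,u(\cdot))$, and set $\mu=\sup_{0\le\tau\le t}\|u(\tau)\|_U$. Since $\rho\in\mathcal K$ and $\|u(\tau)\|_U\le\mu$, the threshold condition $\|x\|_X\ge\rho(\mu)$ is at least as strong as $\|x\|_X\ge\rho(\|u(\tau)\|_U)$, so along the trajectory the dissipation inequality $\dot V(x)\le-W(\|x\|_X)<0$ holds whenever $\|x(\tau)\|_X\ge\rho(\mu)$.

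First I would exhibit the invariant sublevel set and read off the asymptotic gain. Put $c=\alpha_2(\rho(\mu))$ and $S=\{x\in D_x:V(x)\le c\}$. The sandwich $\alpha_1(\|x\|_X)\le V(x)\le\alpha_2(\|x\|_X)$ gives, on $\partial S=\{V=c\}$, the estimate $\|x\|_X\ge\alpha_2^{-1}(c)=\rho(\mu)$, so $\dot V<0$ there; hence $V$ strictly decreases across $\partial S$ and $S$ is positively invariant. Inside $S$ one then has $\|x\|_X\le\alpha_1^{-1}(V(x))\le\alpha_1^{-1}(\alpha_2(\rho(\mu)))$, a $\mathcal K$ function of $\mu$, which is the ultimate bound (the gain).

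Second, for the transient decay I would work in $\{V>c\}$, where $\|x\|_X\ge\alpha_2^{-1}(V(x))>\rho(\mu)$ and thus $\dot V\le-W(\|x\|_X)$. After lower-bounding $W(\|x\|_X)$ by a positive-definite function $\sigma$ of $V$ (built from the range $\|x\|_X\ge\alpha_2^{-1}(V)$), I obtain the scalar Dini inequality $\dot V(x(t))\le-\sigma(V(x(t)))$. A comparison principle dominates $V(x(t))$ by the solution of $\dot y=-\sigma(y)$, which by Sontag's $\mathcal{KL}$-lemma is bounded by some $\beta_0(V(x_0),t)\in\mathcal{KL}$ as long as the trajectory stays in $\{V>c\}$. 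Combining the two regimes yields $V(x(t))\le\max\{\beta_0(V(x_0),t),\,c\}$ for all $t$, and converting back through $\alpha_1,\alpha_2$ together with the weak triangle inequality $\alpha_1^{-1}(a+b)\le\alpha_1^{-1}(2a)+\alpha_1^{-1}(2b)$ produces \eqref{eq:(L)ISS} with $\beta(r,t)=\alpha_1^{-1}\!\left(2\beta_0(\alpha_2(r),t)\right)\in\mathcal{KL}$ and $\gamma(\mu)=\alpha_1^{-1}\!\left(2\alpha_2(\rho(\mu))\right)\in\mathcal K$; the factor $2$ is the only difference from the ultimate bound of the previous step.

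Finally I would carry out the local bookkeeping: choose $k_x,k_u>0$ small enough that, whenever $\|x_0\|_X\le k_x$ and $\mu\le k_u$, the sublevel set $\{V\le\alpha_2(k_x)\}$, which by the monotone-decrease argument contains the whole trajectory, lies inside $D_x$ and $\mu\le r_u$, so that the Lyapunov inequalities (valid only on $D_x\times D_u$) are never exited. The main obstacle I anticipate is the comparison step in infinite dimensions: because $\dot V$ is defined through the transition map $\phi$ as an upper limit (Dini derivative) rather than a classical derivative, I must justify that the Dini differential inequality $\dot V\le-\sigma(V)$ rigorously implies the ODE comparison bound, which requires continuity of $t\mapsto V(x(t))$ along trajectories and a Dini-type comparison lemma; this is where the genuine work lies, the $\mathcal{KL}$-lemma and the algebra of comparison functions being otherwise routine. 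A secondary nuisance is that $W\in\mathcal P$ is only positive definite rather than $\mathcal K$, which is handled by passing to the positive-definite lower bound $\sigma$ before invoking Sontag's lemma.
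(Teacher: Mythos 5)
Your proposal is correct, but note that the paper does not prove this theorem at all---it quotes it from \cite{dashkovskiy2013input, zheng2021transporting}---and your argument is essentially the standard proof given in those references: invariance of the sublevel set $\{V\leq\alpha_2(\rho(\mu))\}$, a Dini-derivative comparison inequality $\dot{V}\leq-\sigma(V)$ outside it (where, since $W\in\mathcal{P}$ is not monotone, $\sigma$ must be built from both bounds $\alpha_2^{-1}(V)\leq\|x\|_X\leq\alpha_1^{-1}(V)$, not just the lower one), and Sontag's $\mathcal{KL}$-lemma. Your local bookkeeping also recovers exactly the constants recorded in Remark \ref{remark:(L)ISS-Lyapunov function}, namely $k_x=\alpha_2^{-1}(\alpha_1(r_x))$, $k_u=\rho^{-1}(\min\{k_x,\rho(r_u)\})$, and $\gamma=\alpha_1^{-1}\circ\alpha_2\circ\rho$ (your factor of $2$ disappears if you combine the two regimes with $\max$ rather than a sum).
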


\begin{remark}
\label{remark:(L)ISS-Lyapunov function}
In the above theorem, the resulting constants and comparison functions in Definition~\ref{dfn:(L)ISS} depend on those assumed in Definition~\ref{dfn:(L)ISS-Lyapunov function} according to \cite{khalil2002nonlinear}:
\begin{align*}
    k_x & = \alpha_2^{-1}(\alpha_1(r_x)), \\
    k_u & = \rho^{-1}(\min\{k_x,\rho(r_u)\}), \\
    \gamma & =\alpha_1^{-1} \circ \alpha_2 \circ \rho.
\end{align*}
A useful special case is when the comparison functions $\alpha_1,\alpha_2,W$ are all quadratic, and $\rho$ is linear.
In this case, the resulting ISS inequality \eqref{eq:(L)ISS} takes the special form:
\begin{align}
    \|x(t)\|_X \leq b\|x(0)\|_Xe^{-\lambda t}+\kappa\sup_{0\leq\tau\leq t}\|u(\tau)\|_{U},
\end{align}
where $b,\lambda,\kappa>0$ are constants.    
\end{remark}
}

\subsection{Boundary Stabilization of Cosserat Rod PDEs}

The proof of Theorem \ref{thm:stability} relies on some preliminary results from the work \cite{rodriguez2022networks}, which studies boundary stabilization of Cosserat rods.
The main results are summarized here.
Consider the following system with boundary control:
\begin{align}\label{eq:PDE system with boundary dissipation}
\left\{
\begin{aligned}
    & \partial_t\xi=\partial_s\eta+\ad_{\xi}\eta, \\
    & J\partial_t\eta=\partial_s\phi-\ad_{\xi}^T\phi+\ad_{\eta}^TJ\eta, \\
    & \begin{aligned}
    & \phi(\ell,t)=-\Gamma\eta(\ell,t), \\
    & \eta(0,t)=0, \\
    & \phi(s,0)=\phi_0(s), \\
    & \eta(s,0)=\eta_0(s),
    \end{aligned}
\end{aligned}
\right.
\end{align}
where $\Gamma\in\mathbb{R}^{6\times6}$ is the feedback gain.
The system \eqref{eq:PDE system with boundary dissipation} is related to \eqref{eq:PDE system} by (i) setting all the distributed inputs $\{\phi_{\text{loc}},\psi_{\text{loc}},\psi_{\text{glb}}\}$ (including gravity) to be 0, (ii) setting the boundary condition $\eta(0,t)$ to be 0, and (iii) setting the boundary condition $\phi(\ell,t)$ to be $-\Gamma\eta(\ell,t)$, which is the boundary controller in \cite{rodriguez2022networks}.

The total energy of \eqref{eq:PDE system with boundary dissipation}, consisting of kinetic energy and (elastic) potential energy, is defined by
\begin{align*}
    \mathcal{E}(t)=\int_0^\ell\eta^TJ\eta+\phi^TK^{-1}\phi ds.
\end{align*}
If $\Gamma=0$, it is well-known that the total energy is conserved, i.e., $\frac{d}{dt}\mathcal{E}(t)=0$.
If $\Gamma>0$, one can show that 
\begin{align*}
    \frac{d}{dt}\mathcal{E}(t)=-2\eta^T(\ell,t)\Gamma\eta(\ell,t)\leq0,
\end{align*}
which implies the total energy is non-increasing, i.e., \eqref{eq:PDE system with boundary dissipation} is stable.
However, it is not necessarily asymptotically stable.
The authors of \cite{rodriguez2022networks} then proved that \eqref{eq:PDE system with boundary dissipation} is locally exponentially stable by finding a modulated quadratic Lyapunov functional.
In particular, \eqref{eq:PDE system with boundary dissipation} can be rewritten as
\begin{align}\label{eq:compact PDE with boundary dissipation}
\left\{
\begin{aligned}
    & \partial_ty+A\partial_sy+By=F(y) \\
    & \begin{aligned}
    & \phi(\ell,t)=-\Gamma\eta(\ell,t), \\
    & \eta(0,t)=0, \\
    & y(s,0)=y_0(s),
    \end{aligned}
\end{aligned}
\right.
\end{align}
where $A$, $B$, and $F$ are defined in the same way as in \eqref{eq:compact error PDE}.
The following theorems summarize the main results from \cite{rodriguez2022networks}.

\begin{theorem}[Well-Posedness \cite{rodriguez2022networks}]
For any $T>0$, there exists $\delta>0$ such that if $y_0$ satisfies $\|y_0\|_{H^1}\leq\delta$ and the compatibility conditions, then there exists a unique solution $y\in C^0([0,T),H^1)$ to \eqref{eq:compact PDE with boundary dissipation}.
Moreover, if $\|y(\cdot,t)\|_{H^1}\leq\delta$ for all $t\in[0,T)$, then $T=+\infty$.
\end{theorem}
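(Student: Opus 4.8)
The plan is to read \eqref{eq:compact PDE with boundary dissipation} as a one-dimensional, first-order, semilinear hyperbolic system for $y=[\phi^{T},\eta^{T}]^{T}$ and to invoke the local well-posedness theory for such systems in the space $C^0([0,T);H^1)$ with dissipative boundary data, following the classical theory of one-dimensional quasilinear hyperbolic systems \cite{li2001semi} and its adaptation to Cosserat rods \cite{rodriguez2022networks}. Two structural facts make this applicable. First, the principal part $A=\Lambda^{-1}\Bar{A}=\left[\begin{smallmatrix}0&-K\\-J^{-1}&0\end{smallmatrix}\right]$ is hyperbolic: since $K(s),J(s)$ are symmetric positive definite, $A^2=\mathrm{diag}(KJ^{-1},J^{-1}K)$ has positive spectrum, so the twelve characteristic speeds of $A(s)$ are the $\pm$ square roots of the eigenvalues of $K(s)J(s)^{-1}$, six positive and six negative and none vanishing. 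Second, $\Lambda(s)=\mathrm{diag}(K^{-1},J)$ is a symmetrizer, because $\Lambda A=\Bar{A}$ is symmetric; the associated quadratic form $\int_0^\ell y^{T}\Lambda y\,ds=\int_0^\ell \phi^{T}K^{-1}\phi+\eta^{T}J\eta\,ds$ is exactly the physical energy $\mathcal{E}(t)$ introduced earlier. Diagonalizing $A$ by a smooth $s$-dependent transformation then splits $y$ into six rightward and six leftward Riemann invariants $(r_+,r_-)$.

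Next I would check that the boundary conditions constitute a well-posed dissipative boundary value problem. Because no characteristic speed vanishes, both ends are noncharacteristic, and exactly six conditions are required at each end, prescribing the incoming invariants in terms of the outgoing ones. At $s=0$ the six rightward-incoming invariants are fixed by $\eta(0,t)=0$, and at $s=\ell$ the six leftward-incoming invariants are fixed by the reflection law $\phi(\ell,t)=-\Gamma\eta(\ell,t)$; one verifies each is solvable for its incoming component. Together with the initial datum $y_0$ one imposes the zeroth- and first-order \emph{compatibility conditions} at the corners $(0,0)$ and $(\ell,0)$, which are precisely the hypotheses quoted in the statement and are what permit $H^1$ (rather than merely weak) regularity.

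I would then construct the solution by a Picard iteration: given $y^{(n)}\in C^0([0,T];H^1)$, solve the \emph{linear} diagonalized system by integrating $\Bar{F}(y^{(n)})$ as a source along the characteristics to obtain $y^{(n+1)}$. The crux is a closed a priori $H^1$ estimate, performed with the symmetrizer $\Lambda$. At the $L^2$ level, pairing the equation with $\Lambda y$ and integrating by parts reproduces $\frac{d}{dt}\mathcal{E}(t)=-2\eta^{T}(\ell,t)\Gamma\eta(\ell,t)+(\text{nonlinear terms})$, where the boundary term at $s=\ell$ carries the favorable sign supplied by $\Gamma>0$ and the one at $s=0$ vanishes since $\eta(0,t)=0$; differentiating the system once in $s$ and repeating gives the same structure at the $H^1$ level. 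The only dangerous contribution is the quadratic nonlinearity $F$, but in one space dimension $H^1(0,\ell)$ is a Banach algebra, so $\|F(y)\|_{H^1}\lesssim\|y\|_{H^1}^{2}$ and the estimate closes as $\frac{d}{dt}\|y(\cdot,t)\|_{H^1}^{2}\le C\|y\|_{H^1}^{2}(1+\|y\|_{H^1})$. For $\|y_0\|_{H^1}\le\delta$ small this keeps the iterates in a fixed ball on a short interval, and the same estimate applied to the differences $y^{(n+1)}-y^{(n)}$ makes the map a contraction, yielding a unique fixed point $y\in C^0([0,T];H^1)$.

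The global statement follows from the standard continuation principle: the local solution extends past any finite $T$ as long as $\|y(\cdot,t)\|_{H^1}$ stays below the threshold $\delta$ on which the a priori estimate is valid, so $\|y(\cdot,t)\|_{H^1}\le\delta$ for all $t\in[0,T)$ forces the maximal existence time to be $+\infty$. I expect the genuine obstacle to be closing the energy estimate, not the iteration: $F$ is quadratic and hence not globally Lipschitz, so no global control is possible without both the smallness of the data and the dissipative boundary sign $\phi(\ell,t)=-\Gamma\eta(\ell,t)$ that feeds $-2\eta^{T}\Gamma\eta$ into $\dot{\mathcal{E}}$. A secondary but delicate point is verifying the first-order corner compatibility conditions against the mixed two-point boundary data, since these are exactly what upgrade the solution to the claimed $C^0([0,T);H^1)$ class.
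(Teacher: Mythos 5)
The paper itself offers no proof of this theorem: it is quoted verbatim from \cite{rodriguez2022networks} as a preliminary result, so the relevant comparison is with the standard argument in that reference and in \cite{li2001semi}, which your outline follows closely in structure — the symmetrizer $\Lambda=\mathrm{diag}(K^{-1},J)$ with $\Lambda A=\bar{A}$ symmetric, the twelve nonvanishing characteristic speeds $\pm\sqrt{\mathrm{eig}(KJ^{-1})}$, the count of six incoming characteristics matching the six scalar boundary conditions at each end, Picard iteration with energy estimates closed by the Banach-algebra property of $H^1(0,\ell)$, and the continuation principle for the second clause. Your computations of $A$, $A^2$, the boundary term $\dot{\mathcal{E}}=-2\eta^T(\ell,t)\Gamma\eta(\ell,t)+(\cdots)$, and the Gronwall-type bound giving $\delta$ depending on $T$ are all correct.

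There is, however, one step that fails as written: ``differentiating the system once in $s$ and repeating gives the same structure at the $H^1$ level.'' It does not, because the boundary conditions do not commute with $\partial_s$: the data $\eta(0,t)=0$ and $\phi(\ell,t)=-\Gamma\eta(\ell,t)$ say nothing about $\partial_s\eta(0,t)$ or $\partial_s\phi(\ell,t)$, so after integrating by parts the boundary term $\left[\partial_s y^{T}\bar{A}\,\partial_s y\right]_0^\ell$ has no favorable sign and cannot be absorbed by interior $L^2$ norms (boundary traces are not controlled by them). The standard repair — used in \cite{rodriguez2022networks} and mirrored in this paper's own Appendix proof of Theorem \ref{thm:stability} — is to differentiate in $t$ instead: since the boundary conditions are time-invariant and linear, $\partial_t y$ satisfies the \emph{same} dissipative conditions, so the good sign $-2\,\partial_t\eta^T(\ell,t)\Gamma\,\partial_t\eta(\ell,t)$ reappears at first order; control of $\partial_s y$ is then recovered from the equation itself, which is precisely the norm-equivalence step \eqref{eq:norm equivalence} in the paper. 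With that substitution your argument closes. Two minor points: positive definiteness of $\Gamma$ is convenient but not essential for the fixed-$T$ smallness statement (the semi-global theory of \cite{li2001semi} handles general reflection boundary conditions, with $\delta$ depending on $T$); and at the $C^0([0,T);H^1)$ level only the zeroth-order corner compatibility $\eta_0(0)=0$, $\phi_0(\ell)=-\Gamma\eta_0(\ell)$ is actually required — first-order compatibility would be needed only for higher regularity.
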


\black{
\begin{theorem}[Local Exponential Stability \cite{rodriguez2022networks}]
\label{thm:boundary stabilization}
If $\Gamma$ is positive definite, then \eqref{eq:compact PDE with boundary dissipation} is locally exponentially stable in $H^1$ and there exists $P\in C^1([0,\ell];\mathbb{R}^{12\times12})$ satisfying
\begin{enumerate}
    \item $P(s)$ is positive definite for all $s\in[0,\ell]$, 
    \item $(PA)(s)$ is symmetric for all $s\in[0,\ell]$,
    \item $Q:=-\frac{d}{ds}(PA)+PB+B^TP$ is positive definite for all $s\in[0,\ell]$,
    \item $(y^TPAy)(\ell,t)-(y^TPAy)(0,t)\geq0$ for all $t$,
\end{enumerate}
such that the following function defines a Lyapunov certificate:
\begin{align*}
    \mathcal{V}=\int_0^\ell y^TPy+\partial_ty^TP\partial_tyds,
\end{align*}
which satisfies
\begin{align*}
    \dot{\mathcal{V}}\leq\int_0^\ell -y^TQy-\partial_ty^TQ\partial_tyds.
\end{align*}
\end{theorem}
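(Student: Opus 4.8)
The plan is to establish local exponential stability by constructing a \emph{modulated} quadratic Lyapunov functional, following the weighted-$P$ method for one-dimensional hyperbolic systems \cite{bastin2016stability}. The first move is to expose the hyperbolic structure of the principal part. Since $A=\Lambda^{-1}\Bar{A}=\begin{bmatrix}0&-K\\-J^{-1}&0\end{bmatrix}$ with $K,J$ positive definite, $A$ is invertible and its spectrum splits into positive and negative characteristic speeds (left- and right-travelling waves). I would pass to Riemann-type coordinates that block-diagonalise $A$, which makes the natural candidate for $P(s)$ a block matrix carrying exponential weights $e^{\mp\mu s}$ on the two characteristic families, with a modulation rate $\mu>0$ to be fixed. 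Crucially, to obtain stability in $H^1$ rather than merely $L^2$, the functional must control a derivative; I would include $\partial_t y$ (not $\partial_s y$) because $\partial_t y$ satisfies the \emph{same} linear transport structure (the time-differentiated system has the same $A$, $B$, the linearised $F$, and the differentiated relations $\partial_t\eta(0,t)=0$, $\partial_t\phi(\ell,t)=-\Gamma\partial_t\eta(\ell,t)$ of the same dissipative type). This is exactly why the single weight $P$ appears on both the $y$ and $\partial_t y$ levels of $\mathcal V$.

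Next I would verify the four conditions for the constructed $P$. Positivity (condition 1) is immediate from the exponential weights. Symmetry of $PA$ (condition 2) is arranged by the block choice together with the anti-diagonal structure of $A$; it is precisely the algebraic requirement that, after integrating $2y^TP(-A\partial_s y)$ by parts, the spatial term collapses to the boundary expression $-[y^TPAy]_0^\ell$ plus the interior term $+\int y^T\frac{d}{ds}(PA)y$. Condition 3 ($Q>0$) is then secured by choosing $\mu$ large enough that the contribution $-\frac{d}{ds}(PA)$ (which is $\sim\mu P|A|$ in characteristic coordinates) dominates the skew coupling $PB+B^TP$ coming from $\ad_{\xi_o}$. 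The most delicate condition is 4: I would compute $(y^TPAy)(\ell,t)-(y^TPAy)(0,t)$ and show it is nonnegative by substituting the boundary conditions. At $s=0$ the condition $\eta(0,t)=0$ annihilates the $\eta$-components, while at $s=\ell$ the dissipative relation $\phi(\ell,t)=-\Gamma\eta(\ell,t)$ turns the boundary quadratic form into one governed by $\Gamma$; positive definiteness of $\Gamma$ (exactly as in the energy identity $\frac{d}{dt}\mathcal E=-2\eta^T(\ell,t)\Gamma\eta(\ell,t)\le0$ recalled above) gives it the correct sign, provided $\mu$ is compatible with the weight ratio $P(\ell)/P(0)$.

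With $P$ in hand I would differentiate $\mathcal V$ along solutions, substitute $\partial_t y=-A\partial_s y-By+F(y)$ (and its time derivative for the $\partial_t y$ block), and integrate by parts. Conditions 2--4 reduce the linear part of $\dot{\mathcal V}$ to $-\int_0^\ell(y^TQy+\partial_t y^TQ\partial_t y)\,ds$ minus a nonnegative boundary term, yielding the claimed inequality. The quadratic nonlinearity $F$ contributes terms cubic in $(y,\partial_t y)$; using the embedding $H^1\hookrightarrow L^\infty$ these are bounded by $C\|y\|_{H^1}\,\mathcal V$ and are absorbed into the strictly negative $-\int y^TQy$ margin once $\|y\|_{H^1}$ is below a threshold $\delta$ (the smallness regime guaranteed by the Well-Posedness theorem). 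Finally, the equivalence $\mathcal V\sim\|y\|_{H^1}^2$ follows from positivity of $P$ together with invertibility of $A$, which lets me recover $\partial_s y=A^{-1}(F(y)-By-\partial_t y)$ and bound $\|\partial_s y\|_{L^2}$ by $\|\partial_t y\|_{L^2}+\|y\|_{L^2}$. A Gronwall argument then gives $\mathcal V(t)\le\mathcal V(0)e^{-\lambda t}$ and, through the norm equivalence, local exponential decay in $H^1$.

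The hard part will be the simultaneous satisfaction of conditions 3 and 4, which pull the modulation rate $\mu$ in opposite directions: condition 3 wants $\mu$ large so that $-\frac{d}{ds}(PA)$ overpowers the coupling $B$, whereas condition 4 constrains the admissible ratio $P(\ell)/P(0)$ (hence $e^{\pm\mu\ell}$) so that the boundary form stays nonnegative under the given $\Gamma$. Reconciling these requires a careful choice of the characteristic basis and of the per-family weights so that the dissipation injected at $s=\ell$ through $\Gamma>0$ propagates to control both characteristic components; this is where the off-diagonal (force/velocity) coupling of $A$ and the reference-strain coupling $\ad_{\xi_o}$ in $B$ interact most intricately.
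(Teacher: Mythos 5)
Your proposal follows essentially the same route as the proof this paper relies on: the paper does not prove Theorem~\ref{thm:boundary stabilization} itself but imports it from \cite{rodriguez2022networks}, whose argument is precisely the modulated (exponentially weighted) quadratic Lyapunov functional you describe, applied at both the $y$ and $\partial_t y$ levels to obtain $H^1$ control, with the weight matrix $P$ arranged to satisfy your conditions 1--4. Your sketch is sound and correctly locates the genuine difficulty --- reconciling the interior positivity $Q>0$ with the boundary sign condition through the choice of characteristic weights --- which is exactly the construction of $P$ carried out in \cite{rodriguez2022networks} and simply quoted in this paper.
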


The proof of the existence of $P$ and the procedure for constructing $P$ can be found in \cite{rodriguez2022networks}.
The properties of $P$ are crucial for the proof of Theorem~\ref{thm:stability}.
}

\subsection{Proof of Theorem \ref{thm:stability}}

\begin{proof}
By suitably renaming the variables, \eqref{eq:compact error PDE} can be seen as a perturbation of \eqref{eq:compact PDE with boundary dissipation} with the additional nonlinear terms $G$ and $H$.
The main idea of our proof is to extend the original proof (in \cite{rodriguez2022networks}) of Theorem \ref{thm:boundary stabilization} to accommodate the terms $G$ and $H$ and establish local input-to-state stability.
Throughout the proof, all norms are defined only on the $s$ variable, and we also omit the dependence on $t$ for brevity\footnote{For instance, for a function $f(s,t)$, we denote $\|f\|_{L^2}=\|f(\cdot,t)\|_{L^2}$ which is a function of $t$ but we omit $t$.}.

Let $P$ satisfy all the properties in Theorem \ref{thm:boundary stabilization}.
Define Lyapunov functionals
\begin{align*}
    \mathcal{V}_0(t) & =\int_0^\ell \tilde{y}^TP\tilde{y}ds, \\
    \mathcal{V}_1(t) & =\int_0^\ell\partial_t\tilde{y}^TP\partial_t\tilde{y}ds.
\end{align*}
Let $\mathcal{V}=\mathcal{V}_0+\mathcal{V}_1$.
\black{It is easy to verify that $\sqrt{\mathcal{V}}$ defines a norm.
We will show that $\sqrt{\mathcal{V}}$ is equivalent to $\|\tilde{y}\|_{H^1}$} later in \eqref{eq:norm equivalence}.
Differentiating $\mathcal{V}_0$, substituting \eqref{eq:compact error PDE}, and then using integration by parts and the properties of $P$, we have
\begin{align*}
    \dot{\mathcal{V}}_0 & =\int_0^\ell2 \tilde{y}^TP\big[-A\partial_s\tilde{y}-B\tilde{y} \\
    & \quad +F(\tilde{y})+G(y,\tilde{y})+H(d,\tilde{R},\tilde{y})\big]ds \\
    & =-\tilde{y}^TPA\tilde{y}\mid_0^\ell + \int_0^\ell\tilde{y}^T\partial_s(PA)\tilde{y}+\partial_s\tilde{y}^TPA\tilde{y} \\
    & \quad -\tilde{y}^TPA\partial_s\tilde{y}-\tilde{y}^TPB\tilde{y}-\tilde{y}^TB^TP\tilde{y} \\
    & \quad +\tilde{y}^TP\big[F(\tilde{y})+G(y,\tilde{y})+H(d,\tilde{R},\tilde{y})\big]ds \\
    & \leq\int_0^\ell-\tilde{y}^TQ\tilde{y}+\tilde{y}^TP\big[F(\tilde{y})+G(y,\tilde{y})+H(d,\tilde{R},\tilde{y})\big]ds,
\end{align*}
where the term $\tilde{y}^TPA\tilde{y}\mid_0^\ell\geq0$ by the forth property of $P$ and $Q:=-\partial_s(PA)+PB+B^TP$ is positive definite by the third property of $P$ in Theorem \ref{thm:boundary stabilization}.
By a similar derivation, 
\begin{align*}
    \dot{\mathcal{V}}_1 & \leq\int_0^\ell-\partial_t\tilde{y}^TQ\partial_t\tilde{y} \\
    & \quad +\partial_t\tilde{y}^TP\partial_t\big[F(\tilde{y})+G(y,\tilde{y})+H(d,\tilde{R},\tilde{y})\big]ds.
\end{align*}
Thus, there exists a constant $C_0>0$ such that
\begin{align}\label{eq:Q term}
\begin{split}
    \dot{\mathcal{V}} & \leq -C_0\mathcal{V}+\sum_{i=0}^1\int_0^\ell\partial_t^i\tilde{y}^TP\partial_t^i\big[F(\tilde{y}) \\
    & \quad +G(y,\tilde{y})+H(d,\tilde{R},\tilde{y})\big]ds.
\end{split}
\end{align}

We need to derive some estimates for the nonlinear terms $F$, $G$, and $H$.
From now on, we assume there exist constants ${r_0}$, ${r_1}$, ${r_2}>0$ such that $\|\tilde{y}\|_{H^1}<{r_0}$, $\|y\|_{H^1}<{r_1}$, and $\|d\|_{H}<{r_2}$ for all $t$.
By the Sobolev inequality \cite{evans1998partial}, we correspondingly have $\|\tilde{y}\|_{L^\infty}<c_0{r_0}$, $\|y\|_{L^\infty}<c_1{r_1}$, and $\|d\|_{L^\infty}<c_2{r_2}$ for some constants $c_0,c_1,c_2>0$.

By definition, $F(y)$ is a matrix-valued quadratic function consisting of only square terms.
By H\"older's inequality \cite{evans1998partial}, there exist constants $c_3,c_4>0$ such that 
\begin{align*}
    & \int_0^\ell\tilde{y}^TPF(\tilde{y})ds\leq c_3\|\tilde{y}\|_{L^\infty}\|\tilde{y}\|_{L^2}^2\leq c_0c_3{r_0}\|\tilde{y}\|_{L^2}^2, \\
    & \int_0^\ell\partial_t\tilde{y}^TP\partial_tF(\tilde{y})ds\leq c_4\|\tilde{y}\|_{L^\infty}\|\partial_t\tilde{y}\|_{L^2}^2\leq c_0c_4{r_0}\|\partial_t\tilde{y}\|_{L^2}^2.
\end{align*}
The above estimates imply that there exists a constant $C_1>0$ such that 
\begin{align}\label{eq:F term}
    \sum_{i=0}^1\int_0^\ell\partial_t^i\tilde{y}^TP\partial_t^iF(\tilde{y})ds\leq C_1{r_0}\mathcal{V}.
\end{align}

By definition, $G(y,\tilde{y})$ is a matrix-valued bilinear function. 
By H\"older's inequality, there exist constants $c_5,c_6,c_7>0$ such that
\begin{align*}
    \int_0^\ell & \tilde{y}^TPG(y,\tilde{y})ds \\
    & \leq c_5\|\tilde{y}\|_{L^\infty}\|\tilde{y}\|_{L^2}\|y\|_{L^2} \leq c_1c_5{r_0}\|\tilde{y}\|_{L^2}\|y\|_{L^2}, \\
    \int_0^\ell & \partial_t\tilde{y}^TP\partial_tG(y,\tilde{y})ds \\
    & \leq c_6\|\tilde{y}\|_{L^\infty}\|\partial_t\tilde{y}\|_{L^2}\|\partial_ty\|_{L^2} + c_7\|y\|_{L^\infty}\|\partial_t\tilde{y}\|_{L^2}^2 \\
    & \leq c_0c_6{r_0}\|\partial_t\tilde{y}\|_{L^2}\|\partial_ty\|_{L^2} + c_1c_7{r_1}\|\partial_t\tilde{y}\|_{L^2}^2.
\end{align*}
The above estimates, together with the fact that $\sum_{i=0}^1\|\partial_t^iy\|_{L^2}$ is locally equivalent to $\|y\|_{H^1}$ (\cite{rodriguez2022networks}, p.p. 23), imply that there exist constants $C_2,C_3>0$ such that
\begin{align}\label{eq:G term}
\begin{split}
    \sum_{i=0}^1\int_0^\ell & \partial_t^i\tilde{y}^TP\partial_t^iG(y,\tilde{y})ds \\
    & \leq C_2{r_0}\sqrt{\mathcal{V}}\|y\|_{H^1} + C_3{r_1}\mathcal{V}.
\end{split}
\end{align}

According to the definition of $\Bar{H}$ in \eqref{eq:complete error PDE}, we can denote $H(d,\tilde{R},\tilde{y})=H_1(d,\tilde{y})+H_2(d,\tilde{R})$ where $H_1$ is the portion involving $\ad_{K^{-1}\tilde{\phi}}\phi_{\text{loc}}$ and $H_2$ is the portion involving $\black{\T_{\tilde{R}}^T\psi_{\text{glb}}}$.
By definition, $H_1(d,\tilde{y})$ and $H_2(d,\tilde{R})$ are both matrix-valued bilinear functions.
$H_1(d,\tilde{y})$ can be easily tackled in a similar way as $G(y,\tilde{y})$.
In particular, we can deduce that there exist constants $C_4,C_5>0$ such that
\begin{align}\label{eq:H1 term}
\begin{split}
    \sum_{i=0}^1\int_0^\ell & \partial_t^i\tilde{y}^TP\partial_t^iH_1(d,\tilde{y})ds \\
    & \leq C_4{r_0}\sqrt{\mathcal{V}}\|d\|_{H} + C_5{r_2}\mathcal{V}.
\end{split}
\end{align}
Now we focus on $H_2(d,\tilde{R})$.
According to \eqref{eq:kinematics 2}, $\partial_sR=Ru\cross$ where $u$ is the angular component of the strain $\xi$ and therefore part of $y$.
Since $\tilde{R}(0,t)=0$ for all $t$, by the fundamental theorem of calculus, there exists a constant $c_8>0$ depending on $\ell$ such that
\begin{align*}
    \|\tilde{R}\|_{L^\infty}& \leq c_8\|\partial_s\tilde{R}\|_{L^\infty} \\
    & =c_8\|\hat{R}\hat{u}\cross-\hat{R}u\cross+\hat{R}u\cross-Ru\cross\|_{L^\infty} \\
    & \leq c_8\|\hat{R}\tilde{u}\cross\|_{L^\infty}+c_8\|\tilde{R}\|_{L^\infty}\|u\|_{L^\infty} \\
    & \leq c_9\|\tilde{u}\|_{L^\infty}+c_1c_8{r_1}\|\tilde{R}\|_{L^\infty},
\end{align*}
for some constant $c_9>0$, where we have used the fact that a rotation matrix $\hat{R}$ has a bounded matrix norm.
Thus, by choosing ${r_1}$ to be sufficiently small, we have 
\begin{align*}
    \|\tilde{R}\|_{L^\infty}\leq\frac{c_9}{1-c_1c_8{r_1}}\|\tilde{y}\|_{L^\infty}=:c_{10}\|\tilde{y}\|_{L^\infty},
\end{align*}
for some constant $c_{10}>0$.
By a similar argument as $\partial_s\tilde{R}$ and the fact that $\partial_tR=Rw\cross$ where $w$ is the angular component of the velocity $\eta$ and therefore part of $y$, there exists a constant $c_{11}>0$ such that
\begin{align*}
    \|\partial_t\tilde{R}\|_{L^\infty} & \leq c_{11}\|\tilde{w}\|_{L^\infty}+c_1{r_1}\|\tilde{R}\|_{L^\infty} \\
    & \leq c_{11}\|\tilde{y}\|_{L^\infty}+c_1c_{10}{r_1}\|\tilde{y}\|_{L^\infty}=:c_{12}\|\tilde{y}\|_{L^\infty},
\end{align*}
for some constant $c_{12}>0$.
Now, by H\"older's inequality, there exist constants $c_{13},c_{14}>0$ such that 
\begin{align*}
    \int_0^\ell & \tilde{y}^TPH_2(d,\tilde{R})ds \\
    & \leq c_{13}\|\tilde{y}\|_{L^2}\|\tilde{R}\|_{L^\infty}\|\psi_{\text{glb}}\|_{L^2} \leq c_0c_{10}c_{13}{r_0}\|\tilde{y}\|_{L^2}\|\psi_{\text{glb}}\|_{L^2}, \\
    \int_0^\ell & \partial_t\tilde{y}^TP\partial_tH_2(d,\tilde{R})ds \\
    & \leq \int_0^\ell\partial_t\tilde{y}^TP\big[(\partial_t\tilde{R}^T)\psi_{\text{glb}}+\tilde{R}^T\partial_t\psi_{\text{glb}}\big]ds \\
    & \leq c_{14}\|\partial_t\tilde{y}\|_{L^2}(\|\partial_t\tilde{R}\|_{L^\infty}\|\psi_{\text{glb}}\|_{L^2}+\|\tilde{R}\|_{L^\infty}\|\partial_t\psi_{\text{glb}}\|_{L^2}) \\
    & \leq c_0c_{14}{r_0}\|\partial_t\tilde{y}\|_{L^2}(c_{12}\|\psi_{\text{glb}}\|_{L^2}+c_{10}\|\partial_t\psi_{\text{glb}}\|_{L^2}).
\end{align*}
The above estimates imply that there exists a constant $C_6>0$ such that
\begin{align}\label{eq:H2 term}
\begin{split}
    \sum_{i=0}^1\int_0^\ell \partial_t^i\tilde{y}^TP\partial_t^iH_2(d,\tilde{R})ds \leq C_6{r_0}\sqrt{\mathcal{V}}\|d\|_{H}.
\end{split}    
\end{align}

We also need to show that \black{$\sqrt{\mathcal{V}}$ is equivalent to $\|\tilde{y}\|_{H^1}$}.
Since the system \eqref{eq:compact error PDE} is of first order in space and time, it yields a relationship between $\partial_t\tilde{y}$ and $\partial_s\tilde{y}$.
In particular, using
\begin{align*}
    \partial_t\tilde{y} & =-A\partial_s\tilde{y}-B\tilde{y}+F(\tilde{y})+G(y,\tilde{y})+H(d,\tilde{R},\tilde{y}), \\
    \partial_s\tilde{y} & =A^{-1}\big(-\partial_t\tilde{y}-B\tilde{y}+F(\tilde{y})+G(y,\tilde{y})+H(d,\tilde{R},\tilde{y})\big),
\end{align*}
we can deduce that there exists a constant $\epsilon$, depending on $A$, $B$, ${r_0}$, ${r_1}$, ${r_2}$, such that
\begin{align}\label{eq:norm equivalence}
    \frac{1}{\epsilon}\sum_{i=0}^1|\partial_s^i\tilde{y}|^2\leq\sum_{i=0}^1|\partial_t^i\tilde{y}|^2\leq\epsilon\sum_{i=0}^1|\partial_s^i\tilde{y}|^2,
\end{align}
which implies that \black{$\sqrt{\mathcal{V}}$ and $\|\tilde{y}\|_{H^1}$ are equivalent.
Thus, any stability results for $\sqrt{\mathcal{V}}$ implies equivalent results for $\|\tilde{y}\|_{H^1}$ with the adjustment of multiplicative positive constants.}

Now, using \eqref{eq:Q term}-\eqref{eq:H2 term}, we have
\begin{align*}
    \dot{\mathcal{V}} & \leq -(C_0-C_1{r_0}-C_3{r_1}-C_5{r_2})\mathcal{V} \\
    & \quad +{r_0}\sqrt{\mathcal{V}}\big(C_2\|y\|_{H^1}+(C_4+C_6)\|d\|_{H}\big) \\
    & =:-C\mathcal{V}+\sqrt{\mathcal{V}}\mathcal{U},
\end{align*}
where $C=C_0-C_1{r_0}-C_3{r_1}-C_5{r_2}$ and $\mathcal{U}={r_0}\big(C_2\|y\|_{H^1}+(C_4+C_6)\|d\|_{H}\big)$.
We can let ${r_0},{r_1},{r_2}$ be sufficiently small such that $C>0$.
Let $\theta\in(0,1)$ be a constant.
\black{We have
\begin{align*}
    \dot{\mathcal{V}} & \leq -C(1-\theta)\mathcal{V}-C\theta\mathcal{V}+\sqrt{\mathcal{V}}\mathcal{U} \\
    & = -C(1-\theta)\mathcal{V}-\sqrt{\mathcal{V}}(C\theta\sqrt{\mathcal{V}}-\mathcal{U}).
\end{align*}
Whenever $\sqrt{\mathcal{V}}\geq\frac{1}{C\theta}\mathcal{U}$, we have
\begin{align*}
    \dot{\mathcal{V}} \leq -C(1-\theta)\mathcal{V}.
\end{align*}
According to Definition~\ref{dfn:(L)ISS-Lyapunov function}, $\mathcal{V}$ is an LISS-Lyapunov functional.
Invoking Theorem~\ref{thm:(L)ISS-Lyapunov function}, Remark~\ref{remark:(L)ISS-Lyapunov function}, and the equivalence of $\sqrt{\mathcal{V}}$ and $\|\tilde{y}\|_{H^1}$, we obtain \eqref{eq:ISS}.}
\end{proof}

\bibliographystyle{IEEEtran}
\bibliography{References}

\end{document}